\documentclass[10pt,journal,compsoc]{IEEEtran}



%

\usepackage{times}
\usepackage{epsfig}
\usepackage{graphicx}
\usepackage{amsmath}
\usepackage{amssymb}
\setlength{\tabcolsep}{2pt}
\newcommand{\tabincell}[2]{\begin{tabular}{@{}#1@{}}#2\end{tabular}}
\usepackage{multirow}
\usepackage{booktabs}
\usepackage{tabu}
\usepackage[dvipsnames]{xcolor}
\usepackage{spreadtab}
\usepackage{tabularx}

\usepackage{hyperref}
\usepackage[switch]{lineno} 
\usepackage{amsthm}
\newtheorem{theorem}{Theorem}[section]
\newtheorem{lemma}[theorem]{Lemma}
\theoremstyle{definition}
\newtheorem{definition}{Definition}[section]
\newtheorem*{remark}{Remark}

\newcommand{\etal}{\textit{et al.}}
%
\ifCLASSOPTIONcompsoc
  \usepackage[nocompress]{cite}
\else
  \usepackage{cite}
\fi

%
\ifCLASSINFOpdf
\else
\fi


\hyphenation{op-tical net-works semi-conduc-tor}

\begin{document}
%
\title{PRIN/SPRIN: On Extracting Point-wise Rotation Invariant Features}
%
%
%
%
\author{Yang You, Yujing Lou, Ruoxi Shi, Qi Liu, Yu-Wing Tai, \\
Lizhuang Ma, Weiming Wang, Cewu Lu~\IEEEmembership{Member,~IEEE}
\IEEEcompsocitemizethanks{\IEEEcompsocthanksitem Yang You, Yujing Lou, Ruoxi Shi, Qi Liu, Lizhuang Ma, Weiming Wang, Cewu Lu are with Shanghai Jiao Tong University, Shanghai, 200240, China. Yu-Wing Tai is with the Department of Computer Science and Engineering, The Hong Kong University of Science and Technology, Hong Kong. Cewu Lu is also the member of Qing Yuan Research Institute, Shanghai Qizhi Research Institute and MoE Key Lab of Artificial Intelligence, AI Institute, Shanghai Jiao Tong University, China.
\\
\IEEEcompsocthanksitem Weiming Wang and Cewu Lu are the corresponding authors.\protect\\
E-mail: wangweiming@sjtu.edu.cn, lucewu@sjtu.edu.cn.
}
}

%
%

\markboth{Journal of \LaTeX\ Class Files,~Vol.~14, No.~8, August~2015}%
{Shell \MakeLowercase{\textit{et al.}}: Bare Advanced Demo of IEEEtran.cls for IEEE Computer Society Journals}
%



\IEEEtitleabstractindextext{%
\begin{abstract}
Point cloud analysis without pose priors is very challenging in real applications, as the orientations of point clouds are often unknown. In this paper, we propose a brand new point-set learning framework PRIN, namely, \textbf{P}oint-wise \textbf{R}otation \textbf{I}nvariant \textbf{N}etwork, focusing on rotation invariant feature extraction in point clouds analysis. We construct spherical signals by Density Aware Adaptive Sampling to deal with distorted point distributions in spherical space. Spherical Voxel Convolution and Point Re-sampling are proposed to extract rotation invariant features for each point. In addition, we extend PRIN to a sparse version called SPRIN, which directly operates on sparse point clouds. Both PRIN and SPRIN can be applied to tasks ranging from object classification, part segmentation, to 3D feature matching and label alignment. Results show that, on the dataset with randomly rotated point clouds, SPRIN demonstrates better performance than state-of-the-art methods without any data augmentation. We also provide thorough theoretical proof and analysis for point-wise rotation invariance achieved by our methods. The code to reproduce our results will be made publicly available.
\end{abstract}

\begin{IEEEkeywords}
Point cloud, object analysis, rotation invariance, feature learning
\end{IEEEkeywords}}

\maketitle

\IEEEdisplaynontitleabstractindextext


%
\IEEEpeerreviewmaketitle

\ifCLASSOPTIONcompsoc
\IEEEraisesectionheading{\section{Introduction}\label{sec:introduction}}
\else
\section{Introduction}
\label{sec:introduction}
\fi
Deep learning on point clouds has received tremendous interest in recent years. Since depth cameras capture point clouds directly, efficient and robust point processing methods like classification, segmentation and reconstruction have become key components in real-world applications. Robots, autonomous cars, 3D face recognition and many other fields rely on learning and analysis of point clouds. 

Existing works like PointNet~\cite{8099499} and PointNet++~\cite{qi2017pointnet++} have achieved remarkable results in point cloud learning and shape analysis. But they focus on objects with canonical orientations. In real applications, these methods fail to be applied to rotated shape analysis since the model orientation is often unknown as a priori, as shown in Figure~\ref{fig:pn_partition}. In addition, existing frameworks require massive data augmentation to handle rotations, which induces unacceptable computational cost.


\begin{figure}[h]
\begin{center}
  \includegraphics[width=\linewidth]{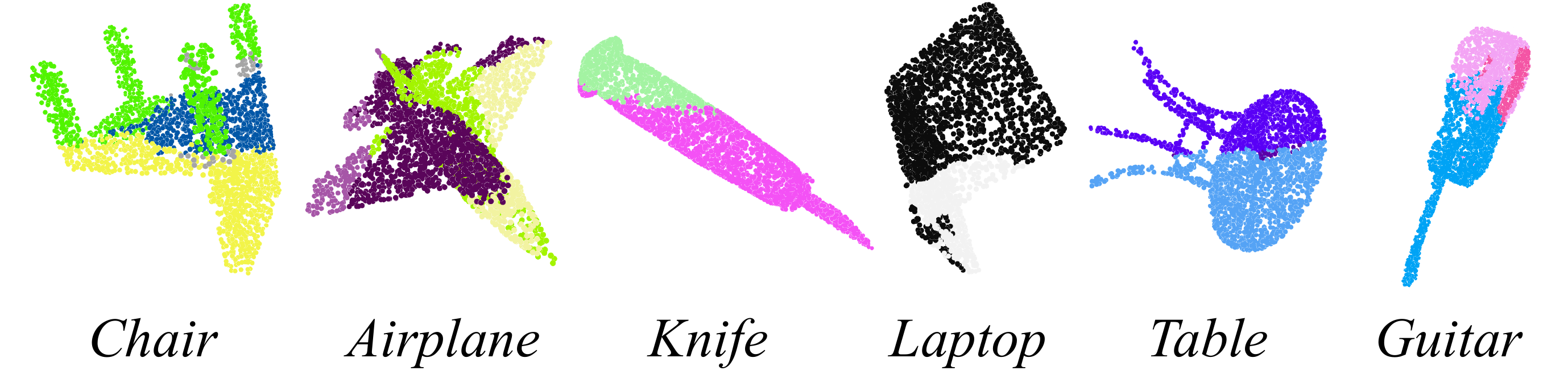}
\end{center}
\caption{\textbf{PointNet++ part segmentation results on rotated shapes.} When trained on objects with canonical orientations and evaluated on rotated ones, PointNet++ is unaware of their orientations and fails to segment their parts out.}
\label{fig:pn_partition}
\end{figure}


Spherical CNN~\cite{cohen2018spherical} and a similar method~\cite{esteves2018learning} try to solve this problem and propose a global feature extracted from continuous meshes, while they are not suitable for point clouds since they project 3D meshes onto their enclosing spheres using a ray casting scheme. Difficulty lies in how to apply spherical convolution in continuous domain to sparse point clouds. Besides, by projecting onto a unit sphere, their method is limited to processing convex shapes, ignoring any concave structures. Therefore, we propose a point-wise rotation invariant network (PRIN) to handle these problems. Firstly, we observe the discrepancy between unit spherical space and Euclidean space, and propose Density Aware Adaptive Sampling (DAAS) to avoid biases. Secondly, we come up with Spherical Voxel Convolution (SVC) without loss of rotation invariance, which is able to capture any concave information. Furthermore, we propose Point Re-sampling module that helps to extract rotation invariant features for each point.



PRIN is a network that directly takes point clouds with random rotations as the input, and predicts both categories and point-wise segmentation labels without data augmentation. It absorbs the advantages of both Spherical CNN and PointNet-like network by keeping rotation invariant features, while maintaining a one-to-one point correspondence between the input and output.
PRIN learns rotation invariant features at spherical voxel grids. Afterwards, these features could be aggregated into a global descriptor or per-point descriptor to conduct model classification or part segmentation, respectively. We rigorously prove the point-wise rotation invariance of PRIN under certain conditions. 

In addition, we extend our PRIN framework and propose a sparse version of PRIN called SPRIN. SPRIN considers the input as the Dirac delta function and gives rotation invariant features when the filter is constant on the left coset of $z$-axis rotation.

We experimentally compare PRIN and SPRIN with various state-of-the-art approaches on the benchmark dataset: ShapeNet part dataset~\cite{Yi16} and ModelNet40~\cite{wu20153d}. Additionally, both PRIN and SPRIN can be applied to 3D point matching and label alignment. Both PRIN and SPRIN exhibit remarkable performance. SPRIN also achieves the state-of-the-art performance on part segmentation.

\begin{figure*}[t]
\begin{center}
\includegraphics[width=\linewidth]{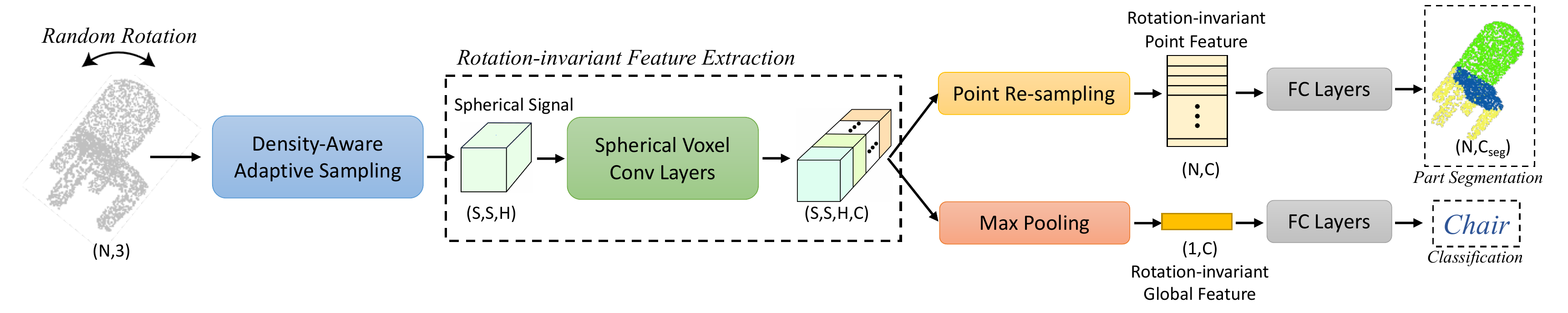}
\end{center}
\caption{\textbf{PRIN Architecture.}  
Our network takes sparse points as input, and then uses Density-Aware Adaptive Sampling to transform the signals into spherical voxel grids. The spherical voxel signals are then passed through several Spherical Voxel Convolution layers, ending with a feature at each spherical voxel grid. Any point feature can be extracted by point re-sampling, which is used to do point-wise part segmentation. All these voxel features can also be max-pooled to get a global feature, which is suitable for classification.
%
}
\label{fig:pipeline}
\end{figure*}

The key contributions of this paper are as follows:
\begin{itemize}
\item{We design two novel deep network processing pipelines PRIN/SPRIN that extracts rotation invariant point-level features.}
\item{Three key modules: Density Aware  Adaptive  Sampling (DAAS), Spherical Voxel Convolution (SVC) and Point Re-sampling are proposed for PRIN. }
\item{We propose a special spherical voxel convolution and prove that it is rotation equivariant. In addition, we extend this convolution to the sparse domain and develop a sparse version of PRIN called SPRIN. Rigorous proof of point-wise rotation invariance is given for both PRIN and SPRIN frameworks.}
\item{We show that PRIN/SPRIN can be used for point cloud part segmentation, classification, 3D point matching and label alignment under different rotations. SPRIN achieves the state-of-the-art performance on part segmentation.}
\end{itemize}

A  preliminary  version  of  this  work  was  presented  in  AAAI2020~\cite{you2020pointwise}.  In  this  study,  we  extend  it  in  two  fundamental aspects. First, we provide thorough theoretical analysis and proof for the point-wise rotation invariance in PRIN, and some necessary filter conditions are proposed. With this revised version of spherical voxel convolution, we achieve much better results than our previous work. Second, we extend PRIN to sparse domain, where the Dirac delta function is leveraged and sparse correlation is proposed with guaranteed rotation invariance. The sparse version of PRIN, which is named SPRIN, achieves the state-of-the-art performance on ShapeNet part segmentation.


\section{Related Work}

\subsection{Rotation invariant Features}
Rotation Invariance is often regarded as a preliminary to the success of template matching and object detection, in both 2D and 3D domains.

The development of rotation invariant features from geometries could be retrospected to manual designed features, including Structural Indexing~\cite{stein1992structural}, Signature of Histogram Orientations (SHOT)~\cite{shot}, CGF~\cite{khoury2017learning} and Snapshots~\cite{malassiotis2007snapshots}. They construct a local reference frame (LRF) which aligns the model into its canonical pose in order to extract rotation invariant point features. However, these methods depend on local surface variations, therefore are sensitive to noise and point densities. Besides, these descriptors rely on delicate hand-craft design, and could only capture low-level geometric features. For a more complete review on traditional feature descriptors, we refer to Guo \etal~\cite{guo20143d}.


Recently, some papers consider generalizations of 2D CNNs that exploit larger groups of symmetries~\cite{gens2014deep,cohen2016steerable}, including the 2D/3D rotation group~\cite{dieleman2015rotation}. Spherical CNN~\cite{cohen2018spherical} and a similar method~\cite{esteves2018learning} propose to extract global rotation invariant features from continuous meshes, while they are not suitable for point clouds since they project 3D meshes onto their enclosing spheres using a ray casting scheme.

In parallel to group invariant/equivalent convolutions, some researchers incorporate rotation invariant point-level convolutions. SRINet~\cite{sun2019srinet} proposes the point projection feature, which is invariant to the rotation of the input point cloud. It introduces an efficient key point descriptor to assign each point with different response and help recognize the overall geometry. Poulenard \etal~\cite{poulenard2019effective} employ a spherical harmonics based kernel at different layers of a point-based PCNN architecture.  Kim \etal~\cite{kim2020rotation} take advantages of multi-level abstraction based on graph convolutional neural networks, which constructs a descriptor hierarchy to encode rotation invariant shape information of an input object in a bottom-up manner. Zhang \etal~\cite{zhang2019rotation} use low-level rotation invariant geometric features such as distances and angles to design a convolution operator for point cloud learning. PPF-FoldNet~\cite{deng2018ppf} obtains unsupervised rotation invariant point-wise features via an auto encoder-decoder structure. Li \etal~\cite{li2020rotation} present a network architecture to embed rotation invariant representations into features, encoding local relations between points and their neighbors, and the global shape structure. Using graphs as point cloud representation is another way to achieve rotation invariance like Zhang \etal~\cite{zhang2018graph} and Wang \etal~\cite{wang2021point}.

\subsection{Rotation Equivariance in Point Clouds}
Rotation Equivariance is closely related to rotation invariant point features, where feature locations are equivalently transformed according to the input rotation. Tensor field networks~\cite{thomas2018tensor} build filters from spherical harmonics and are proven to be equivariant to rotations. SE(3)-Transformers~\cite{fuchs2020se} combine graph networks and self-attention mechanisms to fulfill the goal of rotation-translation equivariance. However, both Tensor field networks and SE(3)-Transformers do not scale well with input points and fail to be applied to large point clouds (e.g. 2048 points). In contrast, our SPRIN can easily fit 2048 points with a large batch size, due to the introduced sparse correlation.

Cohen \etal~\cite{cohen2018spherical}, Esteves \etal~\cite{esteves2018learning} and Cruz~\cite{cruz2012scale} all decomposes SO(3) group with irreducible harmonic orthogonal basis, which is proven to strictly equivariant to SO(3) rotations. Quaternion equivariant capsule networks~\cite{zhao2020quaternion} disentangles geometry from pose with dynamic routing algorithm from capsule networks. 
Rotation Equivariance can be also achieved by giving a rotation invariant representation for each individual point, such as Khoury \etal~\cite{khoury2017learning} and Gojcic \etal~\cite{gojcic2019perfect}. 

\subsection{Deep Learning on 3D Shapes}
As the consequence of success in deep learning, various methods have been proposed for better understanding 3D models. Convolutional neural networks are applied to volumetric data since its format is similar to pixels in an image and easy to transfer to existing frameworks. 3D ShapeNet~\cite{wu20153d} and VoxNet~\cite{7353481} are pioneers introducing fully-connected networks to voxels. However, dealing with voxel data requires large memory and the sparsity of point sets also makes it challenging to extract particular features from big data. As a consequence, MinkowskiNet~\cite{choy20194d} and Submanifold Sparse Convolutional Networks~\cite{graham2017submanifold} try to solve this by conducting sparse convolutions near existing points. Our SPRIN also takes inspiration from these works to improve the scalability of PRIN.

Another research branch is multi-view methods. MVCNN~\cite{su2015multi} renders 3D models into multi-view images and propagates these images into traditional convolutional neural networks. 
These approaches are limited to simple tasks like classification and not suitable for 3D part segmentation, key point matching or other tasks. 

Dealing with point clouds directly is another popular branch, among which PointNet~\cite{8099499} is the pioneer in building a general framework for learning point clouds. Since then, many networks are proposed to learn from point clouds. PointNet++~\cite{qi2017pointnet++} extends PointNet by introducing a hierarchical structure. RSNet~\cite{huang2018recurrent} combines a novel slice pooling layer, Recurrent Neural Network (RNN) layers, and a slice unpooling layer, to better propagate features among different parts. SplatNet~\cite{su18splatnet} uses sparse bilateral convolutional layers to enable hierarchical and spatially-aware feature learning. 
DGCNN~\cite{dgcnn} proposes EdgeConv, which explicitly constructs a local graph and learns the embeddings for the edges in both Euclidean and semantic spaces.
As a follow-up, LDGCNN~\cite{zhang2019linked} links the hierarchical features from different dynamic graphs based on DGCNN.
SO-Net~\cite{li2018so} models the spatial distribution of point cloud by building a Self-Organizing Map (SOM) and its receptive field can be systematically adjusted
by conducting point-to-node k nearest neighbor search.
SpiderCNN~\cite{xu2018spidercnn} designs the convolutional filter as a product of a simple step function that captures local geodesic information and a Taylor polynomial that ensures
the expressiveness.
DeepSets~\cite{zaheer2017deep} provides a family of functions which are permutation-invariant and gives a competitive result on point clouds.
Point2Sequence~\cite{liu2019point2sequence} uses a recurrent neural network (RNN)
based encoder-decoder structure, where an attention mechanism is proposed to highlight the importance of different area scales.
Kd-Network~\cite{8237361} utilizes kd-tree structures to form the computational graph, which learns from point clouds hierarchically. However, few of them are robust to random rotations, making it hard to apply them to real applications.

\section{Preliminaries}
\subsection{Unit Sphere: $S^2$}
A point $s$ in a two-dimensional sphere can be uniquely described by its azimuthal and polar angles: $(\alpha, \beta)$, where $\alpha \in [0, 2\pi], \beta \in [0, \pi]$. Furthermore, if we denote $n = (0, 0, 1)^T$ as the North Pole, the coordinate of $s$ is given by $Z(\alpha)Y(\beta)n$, where $Z(\cdot)$ and $Y(\cdot)$ represent the rotation matrix around $z$ and $y$ axes, respectively. For more details of this space, we refer the reader to \cite{moon2012field}.

\subsection{Unit Spherical Space: $S^2\times H$}
\label{sec:s2hpara}
Any point $x$ in unit ball can be uniquely parameterized by an augmented spherical coordinate: $(s(\alpha, \beta), h)$, where $s(\alpha, \beta) \in S^2$ denotes its location when projected to unit sphere, and $h\in H = [0, 1]$ represents the radial distance to the origin. It is obvious that this parametrization is bijective.

\subsubsection{Rotation Transformation}
Consider an arbitrary rotation transformation $Q$ applied to a point $x\in S^2\times H$:
\begin{equation}
\label{eq:rots2h}
    \begin{split}
        Qx(\alpha, \beta, h) &= (Qs(\alpha,\beta), h) \\
           &= (QZ(\alpha)Y(\beta)n, h).
    \end{split}
\end{equation}
Intuitively, $Q$ rotates the point around the origin, while keeping its radial distance towards the origin.

\subsection{3D Rotation Group: $SO(3)$}
\label{sec:so3para}
The 3D rotation group, often denoted $SO(3)$, which is termed ``special orthogonal group'', is the group of all rotations about the origin of three-dimensional Euclidean space $\mathbb{R}^3$ under the operation of composition. Almost every element (except for singularities) in $SO(3)$ can be uniquely parameterized by ZYZ-Euler angles~\cite{siciliano2010robotics}: $(\alpha, \beta, \gamma)$, where $\alpha \in [0, 2\pi], \beta \in [0, \pi], $ and $\gamma \in [0, 2\pi]$. In matrix form, for any element $R\in SO(3)$, $R(\alpha, \beta, \gamma) = Z(\alpha)Y(\beta)Z(\gamma)$. This parametrization is also bijective almost everywhere.

\subsubsection{Rotation Transformation}
Consider an arbitrary rotation transformation $Q$ applied to another rotation $R\in SO(3)$, which is a transformation composition:
\begin{equation}
    \begin{split}
        QR(\alpha, \beta, \gamma) &= QZ(\alpha)Y(\beta)Z(\gamma).
    \end{split}
\end{equation}
\section{PRIN: An Exact Point-wise Rotation Invariant Network}
In this section, we discuss the development of our point-wise rotation invariant algorithm. In Section~\ref{sec:input}, we propose a density aware adaptive sampling module to correct the distortion in spherical voxels. In Section~\ref{sec:rotinv}, we propose a special spherical voxel convolution and prove that it is rotation equivariant (e.g., point-wise rotation invariant) theoretically. Besides, we also derive a sufficient condition on convolutional filters to ensure this rotation equivariance.
\subsection{Problem Statement}
Given a set of unordered points $\mathcal{X} = \{x_i\}$ with $x_i \in \mathbb{R}^{d^{in}}$ and $i=1,\cdots,N$, where $N$ denotes the number of input points and $d^{in}$ denotes the dimension of input features at each point, which can be positions, colors, etc. Our goal is to produce a set of point-wise features $\mathcal{Y} = \{y_i\}$ with $y_i \in \mathbb{R}^{d^{out}}$ and $i=1,\cdots,N$, which are invariant to input orientations. PRIN can be modeled as a rotation invariant function $\mathcal{F}:\mathcal{X}\mapsto\mathcal{Y}$. Although the problem and the method developed are general, we focus on the case $d_{in}=3$ using only Euclidean coordinates as the input. To implement the function $\mathcal{F}$, we design mainly three modules: (1) Density Aware Adaptive Sampling (DAAS) module $\Gamma: \mathbb{R}^{N\times 3}\to\mathbb{R}^{S^2\times H}$ that constructs spherical signals; (2) Spherical Voxel Convolution (SVC) module $\Phi: \mathbb{R}^{S^2\times H\times C_{in}}\to\mathbb{R}^{S^2\times H\times C_{out}}$ that extracts rotation invariant features; (3) Point Re-sampling module $\Lambda: \mathbb{R}^{S^2\times H\times C_{out}}\to\mathbb{R}^{N\times C_{out}}$ that re-samples points from spherical signals. We will explain these modules in the following sections and the whole pipeline is shown in Figure~\ref{fig:pipeline}.

\subsection{Density Aware Adaptive Sampling}
\label{sec:input}
In this step, the objective is to build spherical signals from irregular point clouds. Nonetheless, if we sample point clouds uniformly into regular spherical voxels, we will meet a problem: points around the pole appear to be more sparse than those around the equator in spherical coordinates, which brings a bias to the resulting spherical voxel signals. 
To address this problem, we propose Density Aware Adaptive Sampling (DAAS). DAAS leverages a non-uniform filter to adjust the density discrepancy brought by spherical coordinates, thus reducing the bias. This process can be modeled as $\Gamma:\mathbb{R}^{N\times 3}\to\mathbb{R}^{S^2\times H}$.


\subsubsection{Spherical Distortion}
Specifically, we divide unit spherical space $S^2\times H$ into spherical voxels, which are indexed by $(i, j, k) \in I\times J\times K$, where $I\times J\times K$ is the spatial resolution, also known as the bandwidth~\cite{kostelec2007soft}. Each spherical voxel is represented by its center $(s(a_i, b_j), c_k)$ with $a_i = \frac{i}{I}\cdot2\pi$, $b_j = \frac{j}{J}\cdot\pi$, and $c_k= \frac{k}{K}$. The division of unit spherical space is shown in the left of Figure~\ref{fig:density}.

Then we sample the input signal into a tensor of equal-sized measures of Euler angles $(\alpha,\beta,\gamma)$ in $SO(3)$, with the bijective mapping between $SO(3)$ and $S^2\times H$ (Theorem~\ref{thm:bij}).  They are leveraged for discrete spherical voxel convolution discussed later in Section~\ref{sec:rotinv}. This representation is shown in the right of Figure~\ref{fig:density}.

However, such an equal-angle discretization introduces a non-uniform or distorted distribution of the input signal in Euclidean space. This is illustrated in the middle of Figure~\ref{fig:density}. Since the size of the spherical voxel is smaller around the pole than on the equator, the input signal gets distorted around the pole when we stretch non-equal-sized spherical voxels to equal-sized angle bins.

To address this issue and obtain an unbiased estimate of the input signal, Density Aware Adaptive Sampling (DAAS) is proposed. Formally, we map each point $x_n\in \mathcal{X}$ from Euclidean space $\mathbb{R}^3$ to unit spherical space $S^2\times H$ by calculating its spherical coordinates $x_n = (s(\alpha_n,\beta_n),h_n)$,
and then calculate the spherical signal $f : S^2\times H \to\mathbb{R}$ as follows:
\begin{equation}
\label{eq:change}
\begin{split}
    f(a_i, b_j, c_k) = & \frac{\sum\limits^N_{n=1}w_n\cdot(\xi - \|h_n - c_k\|)}{\sum\limits^N_{n=1}w_n},
\end{split}
\end{equation}
where $w_n$ is a normalizing factor that is defined as
\begin{equation}
\label{eq:wt}
\begin{split}
    w_n =\  &\mathbf{1}(\|\alpha_n - a_i\| < \xi) \\ 
            \cdot &\mathbf{1}(\|\beta_n - b_j\| < \eta\xi) \\ 
            \cdot &\mathbf{1}(\|h_n - c_k\| < \xi),
\end{split}
\end{equation}
where $\xi$ is a predefined threshold filter width and $\eta$ is the Density Aware Adaptive Sampling Factor. $f$ can be viewed as an unbiased version of the empirical distribution of $x$, except we use $(\xi - \|h_n - c_k\|)$ instead of Dirac delta because it captures information along the $H$ axis, which is orthogonal to $S^2$ and is invariant under random rotations.

\begin{theorem}[Density Aware Adaptive Sampling Factor]
$\eta = sin(\beta)$ is the Density Aware Adaptive Sampling Factor, accounting for distorted spherical voxels in Euclidean spaces.
\end{theorem}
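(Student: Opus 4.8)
The plan is first to make quantitatively precise what ``distorted spherical voxels'' means, and then to choose $\eta$ so that the sampling window in Eq.~\eqref{eq:wt} cancels that distortion exactly. I would start from the augmented spherical parametrization of Section~\ref{sec:s2hpara}: a point $x$ of the unit ball has Cartesian coordinates $x = h\,Z(\alpha)Y(\beta)n = (h\sin\beta\cos\alpha,\; h\sin\beta\sin\alpha,\; h\cos\beta)$. Differentiating, the coordinate vector fields $\partial x/\partial\alpha$, $\partial x/\partial\beta$, $\partial x/\partial h$ are mutually orthogonal with Euclidean norms $h\sin\beta$, $h$ and $1$ respectively; equivalently the metric pulled back to the $(\alpha,\beta,h)$ chart is $\operatorname{diag}(h^2\sin^2\beta,\, h^2,\, 1)$. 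This is the precise form of the distortion: one unit of azimuthal angle sweeps a physical arc length $h\sin\beta$, which degenerates to $0$ at the poles, whereas the polar direction (length $h$) and the radial direction (length $1$) are undistorted.

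Next I would read off the physical shape of the indicator window appearing in Eq.~\eqref{eq:change}. Around a voxel center $(a_i,b_j,c_k)$, the weight $w_n$ of Eq.~\eqref{eq:wt} retains the coordinate box $\|\alpha-a_i\|<\xi$, $\|\beta-b_j\|<\eta\xi$, $\|h-c_k\|<\xi$; by the metric just computed, to first order in $\xi$ this box has Euclidean extents proportional to $h\sin\beta\cdot\xi$ along the azimuthal direction, $h\cdot\eta\xi$ along the polar direction, and $\xi$ along the radial direction. The distortion affects only the two tangential ($S^2$) directions — the radial axis $H$ is orthogonal to $S^2$ and, as already noted, invariant under random rotations — so ``accounting for the distortion'' should be taken to mean requiring the sampling window be isotropic \emph{within the tangent plane of the sphere}, i.e.\ that its azimuthal and polar Euclidean extents coincide at every voxel. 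Imposing $h\sin\beta\cdot\xi = h\cdot\eta\xi$ and cancelling the common factor $h\xi$ yields $\eta=\sin\beta$, which is the claim. (Placing the correction on the $\beta$-window rather than rescaling the $\alpha$-window by $1/\sin\beta$ is the numerically sensible choice, since $\sin\beta\cdot\xi\to 0$ near the poles, whereas $\xi/\sin\beta\to\infty$ would make the window wrap around the sphere.)

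The differentiation and the norm computations are entirely routine, so I expect the only point requiring care is the modeling statement in the second paragraph — identifying ``correcting the distorted voxels'' with equalizing the two tangential Euclidean extents of the window. I would support this by noting that $\sin\beta$ is exactly the factor in the spherical area element $\sin\beta\,d\alpha\,d\beta$, so that with $\eta=\sin\beta$ the pair of indicators $\mathbf 1(\|\alpha-a_i\|<\xi)\,\mathbf 1(\|\beta-b_j\|<\eta\xi)$ selects, to leading order, a patch of fixed aspect ratio (a square in the local orthonormal tangent frame) at every voxel, rather than a pole-elongated lune; averaging the input against such a uniformly-shaped window is precisely what removes the directional bias in the estimate $f(a_i,b_j,c_k)$. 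Beyond phrasing this criterion cleanly I do not anticipate any genuine obstacle.
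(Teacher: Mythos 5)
Your derivation is correct and lands on the same factor, but it gets there by a genuinely different (and in one respect sharper) argument than the paper. The paper computes only the Jacobian determinant of the map $(\alpha,\beta,h)\mapsto(x,y,z)$, obtains $h^2\sin\beta$, dismisses the $h^2$ factor as an isotropic radial effect, and declares the residual $\sin\beta$ to be the sampling factor. You instead compute the full pulled-back metric $\operatorname{diag}(h^2\sin^2\beta,\,h^2,\,1)$, read off the per-direction Euclidean stretches $h\sin\beta$, $h$, $1$, and then fix $\eta$ by demanding that the indicator box in Equation~(\ref{eq:wt}) be isotropic within the tangent plane of the sphere, which forces $h\sin\beta\cdot\xi=h\cdot\eta\xi$ and hence $\eta=\sin\beta$. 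What your route buys is an explanation the determinant argument cannot supply: a determinant does not tell you \emph{which} coordinate direction carries the degeneracy, so the paper's proof never actually justifies why the correction is attached to the $\beta$-window rather than the $\alpha$-window; your metric computation localizes the distortion to the azimuthal direction and your final parenthetical (shrinking the $\beta$-window keeps it bounded, whereas widening the $\alpha$-window by $1/\sin\beta$ blows up at the poles) closes exactly that gap. What the paper's route buys is brevity, and its volume-element phrasing connects more directly to the "equal-sized angle bins versus distorted Euclidean voxels" picture in Figure~\ref{fig:density}. In both cases the statement is really a modeling claim rather than a theorem, and you are right to flag that the only genuinely contestable step is the choice of criterion (tangential isotropy of the window); given that criterion, your computation is routine and complete.
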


\begin{proof}
To get the relationship between the differential spherical volumes and rectangular (e.g., Euclidean) volumes, we need to calculate the Jacobian of the transformation from spherical coordinates to Euclidean coordinates. Denote some Euclidean coordinate as $(x, y, z)$ and its corresponding spherical coordinate as $(\alpha, \beta, h)$, we have~\cite{thomas2014early},
\begin{equation}
    \begin{split}
        x &= h\mathrm{sin}(\beta)\mathrm{cos}(\alpha),\\
        y &= h\mathrm{sin}(\beta)\mathrm{sin}(\alpha), \\
        z &= h\mathrm{cos}(\beta).
    \end{split}
\end{equation}
Denote the Jacobian as $J$:
\begin{equation}
J = 
\begin{bmatrix}
    \frac{\partial x}{\partial\alpha}      & \frac{\partial x}{\partial\beta}  & \frac{\partial x}{\partial h} \\
    \frac{\partial y}{\partial\alpha}      & \frac{\partial y}{\partial\beta}  & \frac{\partial y}{\partial h} \\
    \frac{\partial z}{\partial\alpha}      & \frac{\partial z}{\partial\beta}  & \frac{\partial z}{\partial h}
\end{bmatrix} = h^2\mathrm{sin}(\beta),
\end{equation}
This involves $h$ and $\mathrm{sin}(\beta)$. $h$ does reflect the volume difference of spherical voxels when placed at different distances from the ball center. However, this change is isotropic and does not cause the volume distortion between the equator and the pole, which is shown in Figure~\ref{fig:density}. The other factor $\mathrm{sin}(\beta)$ is therefore the only reason for distorted spherical voxels. In other words, if we normalize the spherical voxel by setting $h=1$, we get $J_{|h=1} = sin(\beta)$.
\end{proof}

\begin{figure}[h]
\begin{center}
    \includegraphics[width=\linewidth]{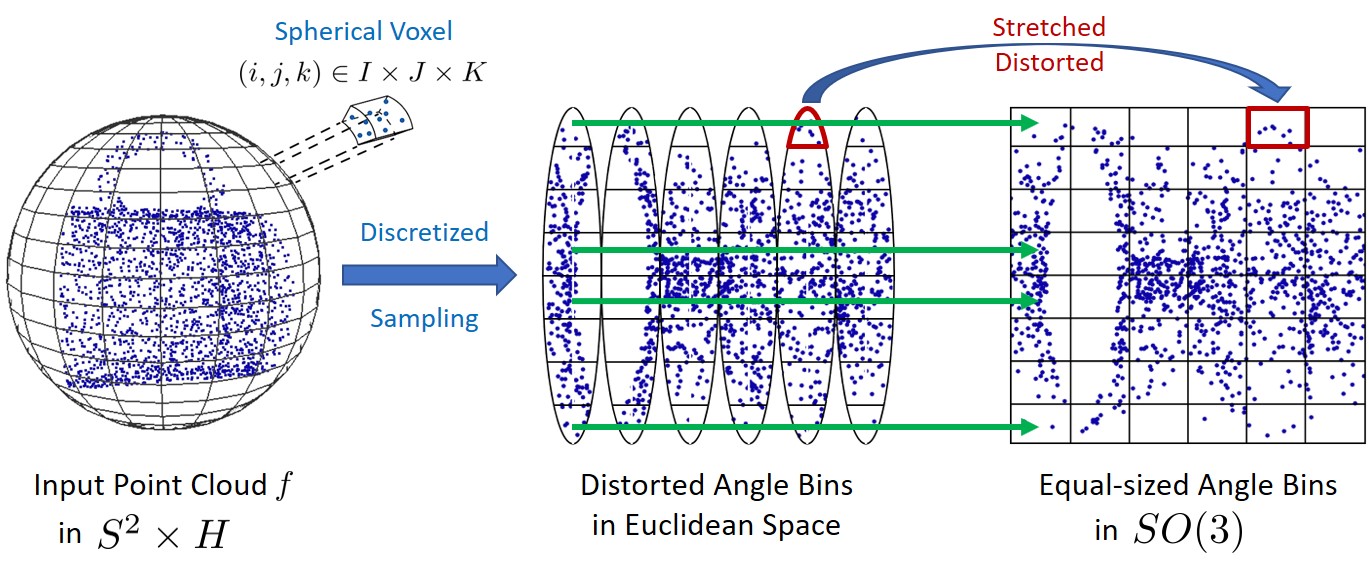}
\end{center}
    \caption{Spherical distortion. An input point cloud is first sampled into discretized angle bins, though such equal-sized angle bins are distorted in Euclidean space. Therefore, we propose Density Aware Adaptive Sampling (DAAS).}
\label{fig:density}
\end{figure}

\subsubsection{Comparison with Other Spherical Representations}
To recap, we create equal-sized angle bins/measures from distorted spherical voxels to form a compact 3D tensor in $SO(3)$. This tensor is later used by discrete spherical voxel convolution to achieve point-wise rotation invariance. 

This is different from what is proposed in SPH3D-GCN~\cite{lei2020spherical}, which also leverages spherical voxel. In SPH3D-GCN, the authors do not consider rotation invariance, and utilize a simple uniform sampling of input signals. This is okay as long as the bias introduced by distorted spherical voxels is consistent in both training and testing. However, if an arbitrary rotation is introduced during testing but no rotation when training, the bias is inevitable, and their method will give an inferior result. This is also confirmed in our ablation studies, referred to as ``uniform sampling''.

Besides, one may consider using HealPix~\cite{gorski2005healpix} that produces a subdivision of a spherical surface in which each pixel covers the same surface area as every other pixel. Though it produces equal sized surface area for each pixel, the center of each pixel is not necessarily aligned with the grid of general discrete Fast Fourier Transforms (FFTs) on the rotation group~\cite{kostelec2008ffts}, which is required for a fast computation of the spherical voxel convolution discussed in Section~\ref{sec:rotinv}. HealPix, though uniformly discretizes voxels in Euclidean space, produces non-uniform discretization in $SO(3)$ space, making the separation of variables impossible. As a result, FFTs are not applicable for HealPix.  For more details about the requirement for FFTs on the rotation group, we refer the reader to Chapter 3 of \cite{kostelec2008ffts}.

\subsection{Spherical Voxel Convolution} 
\label{sec:rotinv}
Given some spherical voxel signal $f: S^2\times H\to\mathbb{R}^{C_{in}}$, we introduce Spherical Voxel Convolution (SVC) to further refine the features, while keeping it rotation invariant. This step implements the operator $\Phi: \mathbb{R}^{S^2\times H \times C_{in}} \to \mathbb{R}^{S^2\times H \times C_{out}}$.

Compared with Spherical CNN~\cite{esteves2018learning} where only spherical signals defined in $S^2$ get convolved. We propose the spherical voxel convolution, which takes spherical signals defined in $S^2\times H$ as the input. 
To understand spherical voxel convolution, we first give some definitions and annotations. Here, we only consider functions with $C_{in}=C_{out}=1$ to reduce clutter, while extensions to higher-dimension functions are straight-forward.

In order to define spherical voxel convolution, we need to first bridge $S^2\times H$ and $SO(3)$ by several theorems and lemmas.
\begin{theorem}[Bijection from $S^2\times H$ to $SO(3)$]
\label{thm:bij}
There exists a bijective map (almost everywhere) $\mathcal{T}: S^2\times H \to SO(3)$, such that $\mathcal{T}(s(\alpha, \beta), h) = R(\alpha, \beta, 2\pi h) = Z(\alpha)Y(\beta)Z(2\pi h)$.
\end{theorem}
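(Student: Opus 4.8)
The plan is to factor $\mathcal{T}$ through the common Euler-angle coordinate box shared by $S^2\times H$ and $SO(3)$. Recall from Section~\ref{sec:s2hpara} that the augmented spherical coordinate map $(\alpha,\beta,h)\mapsto(s(\alpha,\beta),h)$ is a bijection (almost everywhere) from $[0,2\pi]\times[0,\pi]\times[0,1]$ onto $S^2\times H$, and from Section~\ref{sec:so3para} that the ZYZ map $(\alpha,\beta,\gamma)\mapsto R(\alpha,\beta,\gamma)=Z(\alpha)Y(\beta)Z(\gamma)$ is a bijection (almost everywhere) from $[0,2\pi]\times[0,\pi]\times[0,2\pi]$ onto $SO(3)$. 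Since $h\mapsto 2\pi h$ is a smooth bijection from $[0,1]$ onto $[0,2\pi]$, I would define $\mathcal{T}$ as the composition: given $(s(\alpha,\beta),h)\in S^2\times H$, first recover the coordinates $(\alpha,\beta,h)$ (possible a.e.), then set $\mathcal{T}(s(\alpha,\beta),h)=R(\alpha,\beta,2\pi h)=Z(\alpha)Y(\beta)Z(2\pi h)$. Well-definedness a.e. is immediate because the first coordinate map is injective a.e., and the stated formula then holds by construction together with the definition of $R$.

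Next I would verify that this $\mathcal{T}$ is a bijection a.e. For injectivity, if $\mathcal{T}(s(\alpha,\beta),h)=\mathcal{T}(s(\alpha',\beta'),h')$ then $R(\alpha,\beta,2\pi h)=R(\alpha',\beta',2\pi h')$, and by the a.e.-injectivity of the ZYZ parametrization this forces $(\alpha,\beta,2\pi h)=(\alpha',\beta',2\pi h')$, hence $(\alpha,\beta,h)=(\alpha',\beta',h')$, so the two points of $S^2\times H$ coincide. For surjectivity, any $R\in SO(3)$ off the singular set equals $R(\alpha,\beta,\gamma)$ for some $\gamma\in[0,2\pi]$; setting $h=\gamma/(2\pi)\in[0,1]$ produces the preimage $(s(\alpha,\beta),h)$. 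Everything here is routine and I would keep it short.

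The one delicate point — and the main obstacle — is bookkeeping the measure-zero exceptional sets so that "bijective almost everywhere" is genuinely justified rather than hand-waved. Both charts degenerate on essentially the same geometric locus: the poles $\beta\in\{0,\pi\}$ (where the azimuth $\alpha$ is undefined on $S^2$, and where ZYZ exhibits gimbal lock so only $\alpha\pm\gamma$ is determined), together with the seams $\alpha\in\{0,2\pi\}$ and $h\in\{0,1\}$ (equivalently $\gamma\in\{0,2\pi\}$) arising from the periodic identifications. I would let $E_{1}\subset S^2\times H$ and $E_{2}\subset SO(3)$ denote these loci, observe each has measure zero in its respective space, and check that $\mathcal{T}$ restricts to an honest bijection between $S^2\times H\setminus E_{1}$ and $SO(3)\setminus E_{2}$; since the reparametrization $h\mapsto 2\pi h$ is bi-Lipschitz it maps null sets to null sets, so no additional exceptional set is introduced. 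With that the "almost everywhere" claim follows, and the rest is substitution.
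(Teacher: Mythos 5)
Your proposal is correct and follows essentially the same route as the paper: both establish surjectivity by pulling back the ZYZ--Euler angles of any nonsingular $R$ to the point $(s(\alpha,\beta),\gamma/2\pi)$, and injectivity by chaining the a.e.-bijective parametrizations of $S^2\times H$ and $SO(3)$ through the rescaling $\gamma=2\pi h$. Your additional bookkeeping of the exceptional (measure-zero) loci is a more careful rendering of the same argument, not a different one.
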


\begin{proof}
This map is onto because almost every element $R\in SO(3)$ can be parameterized by ZYZ-Euler angles~\cite{siciliano2010robotics}: $(\alpha,\beta,\gamma)$, and there is some $x = (s(\alpha, \beta), \frac{\gamma}{2\pi})$ such that $\mathcal{T}(x) = R(\alpha, \beta, \gamma)$. To prove that this map in injective, suppose $x_1 \neq x_2$, by the bijective parameterization of $S^2\times H$, $(\alpha_1, \beta_1, h_1) \neq (\alpha_2, \beta_2, h_2)$, therefore $\mathcal{T}(x_1) \neq \mathcal{T}(x_2)$ by the bijective parameterization of $SO(3)$.
\end{proof}

\begin{lemma}
\label{lma:tx}
For any $x \in S^2\times H$ and $Q\in SO(3)$, there exists some $\theta$, such that:
\begin{align}
    \mathcal{T}(Qx) = Q\mathcal{T}(x)Z(\theta),
\end{align}
where $Z(\theta)$ is the rotation around $z$ axes by $\theta$.
\end{lemma}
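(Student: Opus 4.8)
The plan is to reduce the claimed identity to one elementary group-theoretic fact: the stabilizer of the North Pole $n=(0,0,1)^T$ under the action of $SO(3)$ is exactly the subgroup of $z$-axis rotations $\{Z(\phi):\phi\in[0,2\pi)\}$. Granting this, the lemma follows by bookkeeping of Euler angles and the fact that $z$-rotations commute and compose additively. I would first state and quickly justify the stabilizer fact (a rotation fixing $n$ preserves the orthogonal $xy$-plane and, having determinant $1$ and acting as the identity on the $z$-axis, restricts to a planar rotation there, hence equals some $Z(\phi)$).

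Next I would set up coordinates. Write $x=(s(\alpha,\beta),h)$ and recall from \eqref{eq:rots2h} that $Qx=(Qs(\alpha,\beta),h)$, i.e.\ $Q$ leaves the radial coordinate $h$ untouched and moves the spherical part to $Qs(\alpha,\beta)=QZ(\alpha)Y(\beta)n\in S^2$. By the (almost-everywhere) bijective spherical parametrization there are angles $\alpha',\beta'$ with $Qs(\alpha,\beta)=s(\alpha',\beta')=Z(\alpha')Y(\beta')n$, so by the definition of $\mathcal{T}$,
\[
\mathcal{T}(Qx)=\mathcal{T}\big(s(\alpha',\beta'),h\big)=Z(\alpha')Y(\beta')Z(2\pi h).
\]
Now the rotations $QZ(\alpha)Y(\beta)$ and $Z(\alpha')Y(\beta')$ send $n$ to the same point, so $\big(Z(\alpha')Y(\beta')\big)^{-1}QZ(\alpha)Y(\beta)$ fixes $n$ and hence equals $Z(\phi)$ for some $\phi$; that is, $QZ(\alpha)Y(\beta)=Z(\alpha')Y(\beta')Z(\phi)$. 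Substituting into $Q\mathcal{T}(x)=QZ(\alpha)Y(\beta)Z(2\pi h)$ and using $Z(\phi)Z(2\pi h)Z(-\phi)=Z(2\pi h)$ gives
\[
Q\mathcal{T}(x)Z(-\phi)=Z(\alpha')Y(\beta')Z(\phi)Z(2\pi h)Z(-\phi)=Z(\alpha')Y(\beta')Z(2\pi h)=\mathcal{T}(Qx),
\]
so the claim holds with $\theta=-\phi$ (equivalently $\theta=2\pi-\phi$), which depends on $Q$ and $x$ but not, importantly for later use, on $h$.

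The only genuinely delicate point — and the one I would treat with a sentence of care rather than ignore — is the ``almost everywhere'' qualifier. The ZYZ parametrizations of both $S^2$ and $SO(3)$ degenerate on the measure-zero locus $\beta\in\{0,\pi\}$, where $\alpha',\beta'$ and $\phi$ may fail to be unique; on the complement of this locus (and of its $Q$-preimage), every expression above is well defined, and since $\mathcal{T}$ itself is only asserted to be bijective almost everywhere this is exactly the regime in which the statement needs to hold. Everything else is routine matrix algebra, so I do not anticipate a real obstacle beyond making the stabilizer claim and the measure-zero caveat precise.
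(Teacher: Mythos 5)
Your proof is correct and follows essentially the same route as the paper's: both decompose $QZ(\alpha)Y(\beta)$ as $Z(\alpha')Y(\beta')Z(\phi)$ (you derive the residual $z$-rotation from the stabilizer of the North Pole, the paper simply invokes the ZYZ parametrization), then use that $z$-rotations fix $n$ and commute to pull $Z(-\phi)$ to the right. Your extra care about the measure-zero degeneracy locus and the observation that $\theta$ is independent of $h$ are refinements, not a different argument.
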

\begin{proof}
Writing left-hand side out and leverage Equation~\ref{eq:rots2h}, we have:
\begin{align}
    \mathcal{T}(Qx(\alpha,\beta, h)) &= \mathcal{T}(Qs(\alpha,\beta),h) \\
    &= \mathcal{T}(QZ(\alpha)Y(\beta)n,h), \label{eq:tq}
\end{align}
Notice that $QZ(\alpha)Y(\beta)$ is a general rotation and can be uniquely parameterized as $Z(\alpha')Y(\beta')Z(\gamma')$. Substitute this into Equation~\ref{eq:tq}:
\begin{align}
    \mathcal{T}(Qx) &= \mathcal{T}(Z(\alpha')Y(\beta')Z(\gamma')n,h)\\
    &= \mathcal{T}(Z(\alpha')Y(\beta')n,h) \label{eq:l2} \\
    &= Z(\alpha')Y(\beta')Z(2\pi h) \\
    &= QZ(\alpha)Y(\beta)Z(-\gamma')Z(2\pi h) \label{eq:l4}\\
    &= QZ(\alpha)Y(\beta)Z(2\pi h)Z(-\gamma') \label{eq:l5}\\
    &= Q\mathcal{T}(x)Z(-\gamma').
\end{align}
Equation~(\ref{eq:l2}) follows from that rotations around $z$ axes fix the North Pole; Equation~(\ref{eq:l4}) follows because $QZ(\alpha)Y(\beta)=Z(\alpha')Y(\beta')Z(\gamma')$; Equation~(\ref{eq:l5}) follows since rotations around $z$ axes are commutative.
\end{proof}

\begin{definition}[Adjoint Function]
We define the adjoint function of $f$ as follows:
\begin{align}
    f_\mathcal{T}(x) = f(\mathcal{T}^{-1}(x)).
\end{align}
\end{definition}
\begin{remark}
Notice that $f_\mathcal{T}$ is also the pull-back function~\cite{solomon2017computational} of the measurable push-forward map $\mathcal{T}$ (up to a constant):
\begin{align}
    \int_{SO(3)}f_\mathcal{T}(x)dx &= \int_{SO(3)}f(\mathcal{T}^{-1}(x))dx \\
    &= \int_{S^2\times H}f(y)d(\mathcal{T}(y)) \\
    &= 2\pi\int_{S^2\times H}f(y)dy \label{eq:pushback},
\end{align}
where we make use of $y:=\mathcal{T}^{-1}(x)$ and Equation~\ref{eq:pushback} follows by Theorem~\ref{thm:bij}.
\end{remark}

Now, we are ready to give a formal definition of spherical voxel convolution, by leveraging $SO(3)$ group convolution.
\begin{definition}[Spherical Voxel Convolution]
\label{thm:svc}
Spherical voxel convolution, evaluated at $p\in S^2\times H$, is defined as:
\begin{equation}
    \label{eq:voxelconv}
    \begin{split}
        [\psi\star f] (p) &= \int_\gamma[\psi_\mathcal{T}\star f_\mathcal{T}](\mathcal{T}(p)Z(\gamma))d\gamma \\
        &= \int_\gamma\int_{SO(3)}\psi_\mathcal{T}(R^{-1}\mathcal{T}(p))f_\mathcal{T}(RZ(\gamma))dRd\gamma.
    \end{split}
\end{equation}
$\psi, f: S^2\times H\to\mathbb{R}$, where $f$ is the input signal and $\psi$ is the filter. Intuitively, our spherical voxel convolution is the corresponding adjoint $SO(3)$ convolution averaged over the coset defined by $Z(\gamma)$.
\end{definition}

\begin{definition}[Rotation Operator~\cite{cohen2018spherical}]
We introduce a rotation function operator $L_Q$: 
\begin{align}
    [L_Qf](x) = f(Q^{-1}x), Q\in SO(3).
\end{align}
It follows that:
\begin{align}
    [L_Qf]_\mathcal{T}(x) = f_\mathcal{T}(\mathcal{T}(Q^{-1}x)) = f_\mathcal{T}(Q^{-1}\mathcal{T}(x)Z(\theta)).
\end{align}
\end{definition}

\begin{theorem}[\textbf{Main Result: Rotation Invariance}]
\label{thm:ri}
The spherical voxel convolution is point-wise rotation invariant: $[\psi\star L_Qf] (Qp) = [\psi\star f] (p)$, when $\psi_\mathcal{T}$ is constant on the right latitude: $\psi_\mathcal{T}(R)\equiv\psi_\mathcal{T}(RZ(\theta))$ for any $R,\theta$.
\end{theorem}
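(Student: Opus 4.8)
The plan is a direct computation: substitute everything into Definition~\ref{thm:svc}, use Lemma~\ref{lma:tx} to turn the Euclidean action of $Q$ into right multiplication by a $z$-rotation inside $SO(3)$, and then use the right-latitude hypothesis on $\psi_\mathcal{T}$ together with invariance of Haar measure to cancel all the spurious $z$-rotations.

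First I would expand the left-hand side using the second line of Equation~(\ref{eq:voxelconv}),
\[
  [\psi\star L_Qf](Qp)=\int_\gamma\int_{SO(3)}\psi_\mathcal{T}\!\left(R^{-1}\mathcal{T}(Qp)\right)[L_Qf]_\mathcal{T}(RZ(\gamma))\,dR\,d\gamma,
\]
and simplify the two arguments separately. For the filter argument, Lemma~\ref{lma:tx} gives $\mathcal{T}(Qp)=Q\mathcal{T}(p)Z(\theta_0)$ for some $\theta_0$ depending on $Q$ and $p$; since $\psi_\mathcal{T}$ is constant on right latitudes, $\psi_\mathcal{T}(R^{-1}Q\mathcal{T}(p)Z(\theta_0))=\psi_\mathcal{T}(R^{-1}Q\mathcal{T}(p))$, which is the only place the hypothesis enters for $\psi$. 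For the signal argument, unfolding $L_Q$ and applying Lemma~\ref{lma:tx} once more yields $[L_Qf]_\mathcal{T}(RZ(\gamma))=f_\mathcal{T}(Q^{-1}RZ(\gamma)Z(\theta))$ for some angle $\theta$; a glance at the proof of Lemma~\ref{lma:tx} shows $\theta$ depends only on $Q$ and on the $S^2$-component of $\mathcal{T}^{-1}(RZ(\gamma))$, and that component is $s(\alpha,\beta)$, which is unchanged by the $Z(\gamma)$ factor, so $\theta=\theta(R)$ does not depend on $\gamma$. Hence, with the $\psi$-factor pulled out of the circle integral, $\int_\gamma f_\mathcal{T}(Q^{-1}RZ(\gamma+\theta(R)))\,d\gamma=\int_\gamma f_\mathcal{T}(Q^{-1}RZ(\gamma))\,d\gamma$ by translation invariance of the measure on the circle.

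Putting the two simplifications together gives
\[
  [\psi\star L_Qf](Qp)=\int_\gamma\int_{SO(3)}\psi_\mathcal{T}\!\left(R^{-1}Q\mathcal{T}(p)\right)f_\mathcal{T}(Q^{-1}RZ(\gamma))\,dR\,d\gamma.
\]
The last step is the change of variables $R\mapsto QR$, which is measure-preserving by left-invariance of the Haar measure on $SO(3)$; it sends $R^{-1}Q\mathcal{T}(p)\mapsto R^{-1}\mathcal{T}(p)$ and $Q^{-1}RZ(\gamma)\mapsto RZ(\gamma)$, so the right-hand side becomes exactly $[\psi\star f](p)$ from Definition~\ref{thm:svc}, finishing the proof.

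The step I expect to cause the most trouble is the signal term: one must check carefully that the $z$-rotation $Z(\theta)$ produced when pushing $L_Q$ through $\mathcal{T}$ really is independent of the coset parameter $\gamma$ (if it varied with $\gamma$, reindexing the circle integral would need a Jacobian argument rather than plain translation invariance), and one must track the normalization and orientation conventions so that the left-translation on $SO(3)$ and the translation on the circle are both genuinely measure-preserving. By contrast, the filter side and the final left-translation are routine once Lemma~\ref{lma:tx} and the right-latitude hypothesis are in hand.
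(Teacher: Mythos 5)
Your proof is correct and follows essentially the same route as the paper's: expand via the definition, use Lemma~\ref{lma:tx} together with the right-latitude hypothesis to strip the spurious $z$-rotation from the filter argument, use translation invariance on the circle for the signal argument, and finish with the measure-preserving change of variables $R\mapsto QR$ on $SO(3)$. Your explicit verification that the angle $\theta$ produced by Lemma~\ref{lma:tx} does not depend on $\gamma$ (because the $S^2$-component of $\mathcal{T}^{-1}(RZ(\gamma))$ is unchanged by the $Z(\gamma)$ factor) is a point the paper's proof leaves implicit, and it is indeed needed for the plain reindexing of the circle integral to be valid.
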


\begin{proof}

Suppose that the input point cloud is rotated by an arbitrary rotation $Q$, we have $p\to Qp$. As a consequence, the corresponding spherical signal is also rotated: $f\to L_Qf$,  since $f$ is sampled from the original point cloud. We are now ready to prove the rotation invariance by applying SVC to the rotated input:
\begin{align}
    &\hspace{0.5em}[\psi\star L_Qf] (Qp)\\
        =& \int_\gamma\int_{SO(3)}\psi_\mathcal{T}(R^{-1}\mathcal{T}(Qp))f_\mathcal{T}(Q^{-1}RZ(\theta+\gamma))dRd\gamma\\
        =& \int_\gamma\int_{SO(3)}\psi_\mathcal{T}(R^{-1}Q\mathcal{T}(p))f_\mathcal{T}(Q^{-1}RZ(\theta+\gamma))dRd\gamma \label{eq:l6}\\
        =& \int_{SO(3)}\psi_\mathcal{T}(R^{-1}\mathcal{T}(p))\int_\gamma f_\mathcal{T}(RZ(\theta+\gamma))d\gamma dR \label{eq:l7}\\
        =& \int_{SO(3)}\psi_\mathcal{T}(R^{-1}\mathcal{T}(p))\int_\gamma f_\mathcal{T}(RZ(\gamma))d\gamma dR \label{eq:l8}\\
        =& \int_\gamma\int_{SO(3)}\psi_\mathcal{T}(R^{-1}\mathcal{T}(p))f_\mathcal{T}(RZ(\gamma))dRd\gamma \\
        =&\hspace{0.5em} [\psi\star f] (p).
\end{align}
Equation~(\ref{eq:l6}) follows from Lemma~\ref{lma:tx} and $\psi_\mathcal{T}(R)\equiv\psi_\mathcal{T}(RZ(\theta))$; Equation~(\ref{eq:l7}) follows from the right invariance of $SO(3)$ group integral~\cite{harmso3}; Equation~(\ref{eq:l8}) is a result of corollary (2.1) in~\cite{harmso3}.
\end{proof}

It is obvious that SVC extracts the same feature no matter how the input point cloud rotates, which ensures point-wise \textbf{rotation invariance}.

In practice, Spherical Voxel Convolution (SVC) can be efficiently computed by Fast Fourier Transform (FFT)\cite{kostelec2008ffts}. Convolutions are implemented by first doing FFT to convert both the input and filters into spectral domain, then multiplying them and converting the results back to spatial domain, using Inverse Fast Fourier Transform (IFFT)\cite{kostelec2008ffts}.

\subsubsection{Finding Filter $\psi$ to Achieve Rotation Invariance}
Notice that to achieve the rotation invariance stated in Theorem~\ref{thm:ri}, we need a filter such that $\psi_\mathcal{T}(R)\equiv\psi_\mathcal{T}(RZ(\theta))$, for any $R,\theta$. Leveraging the Euler angle representation of $R$, we can reform the condition as:
\begin{align}
    \psi_\mathcal{T}(Z(\alpha)Y(\beta)Z(\gamma))\equiv\psi_\mathcal{T}(Z(\alpha)Y(\beta)Z(\gamma + \theta)), \forall \alpha,\beta,\gamma,\theta.
\end{align}
In other words,
\begin{align}
\label{eq:psi}
    \psi_\mathcal{T}(Z(\alpha)Y(\beta)Z(\gamma))\equiv\psi_\mathcal{T}(Z(\alpha)Y(\beta)Z(\gamma')), \forall \alpha,\beta,\gamma,\gamma'.
\end{align}
During implementation, we regard $\psi_\mathcal{T}$ as a 3D tensor with dimensions corresponding to the three Euler angles, we have that $\psi_\mathcal{T}$ is constant on the third dimension.

To qualitatively illustrate this, we plot a sample signal $f$ and its rotated version $L_Q{f}$, together with a filter $\psi$ that satisfy the constraint~\ref{eq:psi} in Figure~\ref{fig:equiv}. We see that after our point-wise rotation invariant spherical voxel convolution, the output is rotation equivariant; and for each individual point, exact rotation invariance is achieved with an error up to the numeric precision.

\begin{figure*}[ht]
    \centering
    \includegraphics[width=\linewidth]{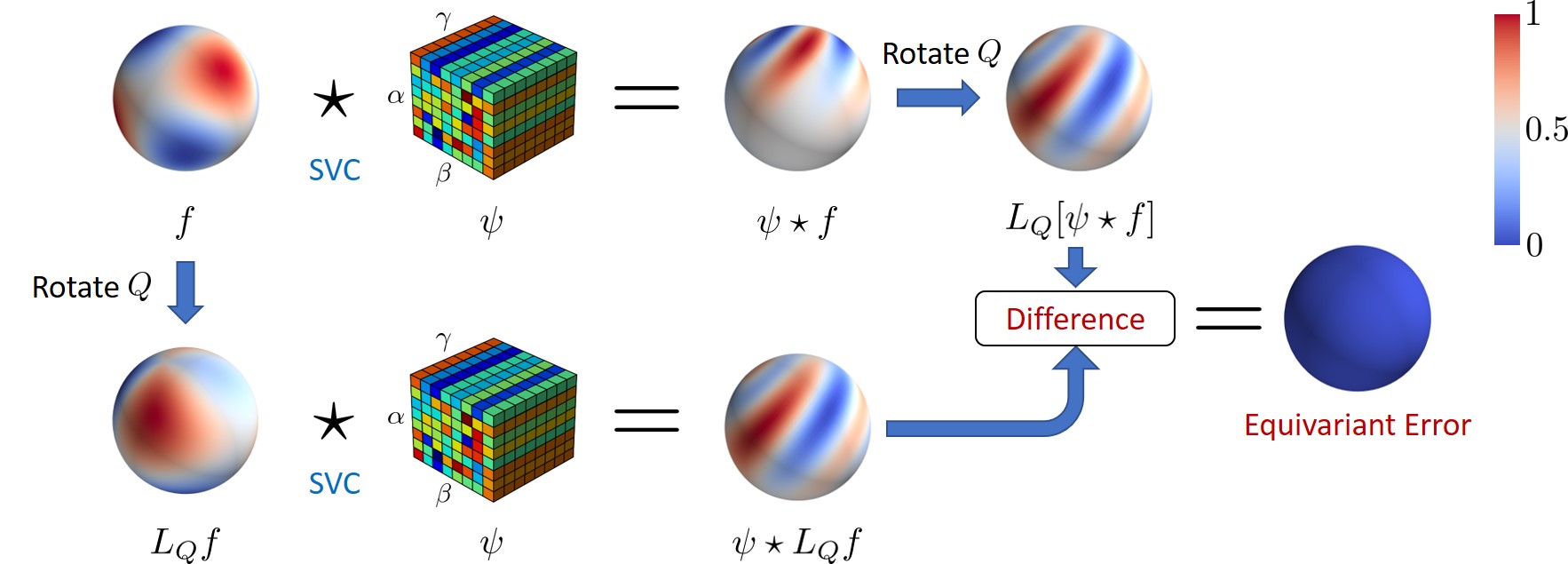}
    \caption{Our spherical voxel convolution that achieves point-wise rotation invariance, a.k.a. rotation equivariant. Applying a rotation to the input signal and do Spherical Voxel Convolution is equivalent to applying the same rotation to the original convolution results. The absolute equivariant error is almost zero, with maximum $4.77\times 10^{-7}$ due to the floating point precision. Also notice that $\psi$ is constant on the third dimension $\gamma$ in order to fulfill the constraint~\ref{eq:psi}. We slightly abuse the notation $\psi$ for $\psi_\mathcal{T}$ for a better illustration.}
    \label{fig:equiv}
\end{figure*}

\subsection{Point Re-sampling}
\label{sec:output}
After Spherical Voxel Convolution (SVC), we re-sample features  at the location of original points, with \textit{Trilinear Interpolation} as our operator $\Lambda : \mathbb{R}^{S^2\times H \times C}\to\mathbb{R}^{N \times C}$. Each point's feature is a weighted average of the nearest eight voxels, where the weights are inversely related to the distances to these spherical voxels.
Formally, denote the target sampled point $x$ as the spherical coordinate $(s(\alpha,\beta),h)$; the discrete feature map before point re-sampling as $\bar{F} \in \mathbb{R}^{I\times J \times K \times C}$, where $I \times J \times K$ are predefined spatial resolution for $S^2\times H$; the point-wise feature map after re-sampling as $F \in \mathbb{R}^{N \times C}$, we have:

\begin{align}
    &F(x, \cdot) = \\
    &\sum_{a,b,c\in\{0,1\}} \frac{w_{abc}\cdot \bar{F}(\left\lfloor{\frac{I}{2\pi}\alpha}\right\rfloor + a, \left\lfloor{\frac{J}{\pi}\beta}\right\rfloor + b, \left\lfloor{K h}\right\rfloor + c, \cdot)}{\sum_{a',b',c'\in\{0,1\}} w_{a'b'c'}},
\end{align}
where
\begin{align}
    w_{abc} =\ &(1-a + (2a-1)(\frac{I}{2\pi}\alpha - \left\lfloor{\frac{I}{2\pi}\alpha}\right\rfloor)) \\
    \cdot &(1-b + (2b-1)(\frac{J}{\pi}\beta - \left\lfloor{\frac{J}{\pi}\beta}\right\rfloor)) \\
    \cdot &(1-c + (2c-1)(K h - \left\lfloor{K h}\right\rfloor)),
\end{align}
and $\left\lfloor{\cdot}\right\rfloor$ is the integer floor function.

Finally, $F$ are passed through several fully connected layers to get refined point-wise features.  

\subsection{Architecture}
To summarize, our rotation invariant function $\mathcal{F}$ is a concatenation of the three modules/operators mentioned above: $\mathcal{Y} = \Lambda(\Phi(\Gamma(\mathcal{X})))$. We first transform points from Euclidean space to unit spherical space by operator $\Gamma$, and then conduct a rotation invariant feature transform by operator $\Phi$, and finally re-sample the point features in Euclidean space by operator $\Lambda$.

After extracting point-wise rotation invariant features, we are able to do part segmentation by concatenating some fully connected layers. Our network could also realize object classification by placing a different head. In this case, we maxpool all the features in spherical voxels and pass this global feature through several fully connected layers, as shown in Figure~\ref{fig:pipeline}. 

\begin{table*}[t]
\begin{center}
\resizebox{\textwidth}{!}{
\begin{tabular}{l|c|c|cccccccccccccccc|c|c}
\toprule[1pt]
\multirow{2}*{}  & \multicolumn{18}{c|}{Arbitrary Rotation} & \multicolumn{2}{c}{No Rotation}\\
\cline{2-21}
~ & \tabincell{c}{avg.\\ inst.} & \tabincell{c}{avg.\\ cls.}  & \tabincell{c}{air\\plane} & bag & cap & car & chair & \tabincell{c}{ear\\ phone} & guitar & knife & lamp & laptop & \tabincell{c}{motor\\ bike} & mug & pistol & rocket & \tabincell{c}{skate\\ board} & table & \tabincell{c}{avg.\\ inst.} & \tabincell{c}{avg.\\ cls.} \\
\hline
PointNet~\cite{8099499} & 31.30 & 29.38 & 19.90 & 46.25 & 43.27	& 20.81	& 27.04 & 15.63 & 34.72	& 34.64	& 42.10	& 36.40	& 19.25	& 49.88	& 33.30	& 22.07	& 25.71 & 29.74 & 83.15 & 78.95\\
PointNet++~\cite{qi2017pointnet++} & 36.66 & 35.00 & 21.90 & 51.70 & 40.06 & 23.13 & 43.03 & 9.65 & 38.51 & 40.91 & 45.56 & 41.75 & 18.18 & 53.42 & 42.19 & 28.51 & 38.92 & 36.57 & 84.63 & 81.52 \\
RS-Net~\cite{huang2018recurrent} & 50.38 & 32.99 & 38.29 & 15.45 & 53.78 & 33.49 & 60.83 & 31.27 & 9.50 & 43.48 & 57.37 & 9.86 & 20.37 & 25.74 & 20.63 & 11.51 & 30.14 & 66.11 & 84.92 & 81.41\\
PCNN~\cite{Atzmon:2018:PCN:3197517.3201301} & 28.80 & 31.72  & 23.46 & 46.55 & 35.25 & 22.62 & 24.27 & 16.67 & 32.89 & 39.80 & 52.18 & 38.60 & 18.54 & 48.90 & 27.83 & 27.46 & 27.60 & 24.88  & 85.13 & 81.80\\
SPLATNet~\cite{su18splatnet} & 32.21 & 38.25 & 34.58 & 68.10 & 46.96 & 19.36 & 16.25 & 24.72 & 88.39 & 52.99 & 49.21 & 31.83 & 17.06 & 48.56 & 21.20 & 34.98 & 28.99 & 28.86 & 84.97 & 82.34 \\
DGCNN~\cite{dgcnn} & 43.79 & 30.87 & 24.84 & 51.29 & 36.69 & 20.33 & 30.07 & 27.86 & 38.00 & 45.50 & 42.29 & 34.84 & 20.51 & 48.74 & 26.25 & 26.88 & 26.95 & 28.85 & 85.15 & 82.33 \\
SO-Net~\cite{li2018so} & 26.21 & 14.37 & 21.08 & 8.46 & 1.87 & 11.78 & 27.81 & 11.99 & 8.34 & 15.01 & 43.98 & 1.81 & 7.05 & 8.78 & 4.41 & 6.38 & 16.10 & 34.98 & 84.83 & 81.16 \\
SpiderCNN~\cite{xu2018spidercnn} & 31.81 & 35.46 & 22.28 & 53.07 & 54.2 & 22.57 & 28.86 & 23.17 & 35.85 & 42.72 & 44.09 & 55.44 & 19.23 & 48.93 & 28.65 & 25.61 & 31.36 & 31.32 & \textbf{85.33} & \textbf{82.40} \\
\hline 
SHOT+PointNet~\cite{shot} & 32.88 & 31.46 & 37.42 & 47.30 & 49.53 & 27.71 & 28.09 & 16.34 & 9.79 & 27.66 & 37.33 & 25.22 & 16.31 & 50.91 & 25.07 & 21.29 & 43.10 & 40.27 & 32.75 & 31.25 \\
CGF+PointNet~\cite{khoury2017learning} & 50.13 & 46.26 & 50.97 & 70.34 & 60.44 & 25.51 & 59.08 & 33.29 & 50.92 & 71.64 & 40.77 & 31.91 & 23.93 & 63.17 & 27.73 & 30.99 & 47.25 & 52.06 & 50.13 & 46.31 \\
\hline
SRINet~\cite{sun2019srinet} & 76.95 & - & - & - & - & - & - & - & - & - & - & - & - & - & - & - & - & - & 76.95 & - \\
RIConv~\cite{zhang2019rotation} & 79.31 & 74.60 & 78.64 & 78.70 & 73.19 & 68.03 & 86.82 & 71.87 & 89.36 & 82.95 & 74.70 & 76.42 & 56.58 & 88.44 & 72.16 & 51.63 & 66.65 & 77.47 & 79.55 & 74.43 \\
Kim et al.~\cite{kim2020rotation} & 79.56 & 74.41 & 77.53 & 73.43 & 76.95 & 66.13 & 87.22 & \textbf{75.44} & 87.42 & 80.71 & 78.44 & 71.21 & 51.09 & 90.76 & 73.69 & 53.86 & 68.10 & 78.62 & 79.92 & 74.69 \\
Li et al.~\cite{li2021rotation} & 82.17 & 78.78 & 81.49 & 80.07 & \textbf{85.55} & 74.83 & \textbf{88.62} & 71.34 & 90.38 & 82.82 & \textbf{80.34} & 81.64 & 68.87 & 92.23 & 74.51 & 54.08 & 74.59 & 79.11 & 82.47 & 79.40 \\
\hline
\textbf{Ours(PRIN)} & 71.20 & 66.75 & 69.29 & 55.90 & 71.49 & 56.31 & 78.44 & 65.92 & 86.01 & 73.58 & 66.97 & 59.29 & 47.56 & 81.47 & 71.99 & 49.02 & 64.70 & 70.12 & 72.04 & 68.39 \\
\textbf{Ours(SPRIN)} & \textbf{82.67} & \textbf{79.50} & \textbf{82.07} & \textbf{82.01} & 76.48 & \textbf{75.53} & 88.17 & 71.45 & \textbf{90.51} & \textbf{83.95} & 79.22 & \textbf{83.83} & \textbf{72.59} & \textbf{93.24} & \textbf{78.99} & \textbf{58.85} & \textbf{74.77} & \textbf{80.31} & 82.59 & 79.31 \\
\bottomrule[1pt]
\end{tabular}}
\end{center}   
\caption{\textbf{Shape part segmentation results on ShapeNet part dataset.} Both average instance and average class IoUs (\%) are reported. All models are trained on the non-rotated  training set, then evaluated on the non-rotated and rotated test set, indicated by the labels.}
\label{tab:compare_miou}
\end{table*}

\section{SPRIN: Sparse Point-wise Rotation Invariant Network}
Though PRIN achieves point-wise rotation invariance by conducting DAAS and SVC, it is limited by the resolution of spherical voxel grids in $S^2\times H$. In practice, the maximum resolution allowed by GPU memory is only $64^3$, while a large quantity of spherical voxels are empty containing no points. This is extremely inefficient and motivates us to propose a sparse version of PRIN, by directly taking the point cloud without sampling into dense grids.

In this section, we provide a sparse version of PRIN, which directly operates on the original sparse point cloud, making it more efficient and achieve the state-of-the-art performance. We borrow the idea of spherical voxel convolution and propose sparse correlation on the input points. Besides, since our sparse correlation directly operates on the input points, there is no need to conduct DAAS or point re-sampling to convert input or output signals across different domains. Details are given below.

\subsection{Sparse Rotation Invariance}
\begin{definition}[Point Cloud]
A point cloud $f$ can be viewed as a proper (normalized) empirical discrete probabilistic distribution of input points:
\begin{align}
    f(x) = \frac{1}{N}\sum_{i}^N\delta(x-x_i),
\end{align}
where $\delta$ is the Dirac delta function, $x_i$ are input points. This is different from that in Section~\ref{sec:input}, where we reconstruct the dense signal through an adaptive sampling method. 
\end{definition}

\begin{definition}[Sparse Correlation]
\label{eq:scorr}
In SPRIN, both input and output are the sparse points of the original point cloud. We directly take $S^2\times H$ correlation between the filter and the point cloud, evaluated at some point $x_j$:
\begin{align}
    [\psi\ast f](x_j) &= \int_{S^2\times H} \psi(\mathcal{T}(x_j)^{-1}x)f(x)dx \\
                     &= \frac{1}{N}\sum_i^N\int_{S^2\times H} \psi(\mathcal{T}(x_j)^{-1}x)\delta(x-x_i)dx\\
                     &= \frac{1}{N}\sum_i^N \psi(\mathcal{T}(x_j)^{-1}x_i) \label{eq:sprin}
\end{align}
\end{definition}
\begin{remark}
The correlation output $[\psi\ast f](x_j)$ can be also viewed as the empirical filter expectation $\mathbb{E}_{x\sim f(x)}[\psi(\mathcal{T}(x_j)^{-1}x)]$.
\end{remark}

\begin{theorem}[Sparse Point-wise Rotation Invariance]
The sparse correlation is point-wise rotation invariant when $\psi$ is constant on the left coset of $Z(\theta)$: $\psi(x)\equiv\psi(Z(\theta)x)$, for any $x,\theta$.
\end{theorem}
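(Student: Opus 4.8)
The plan is to reuse the logic of the proof of Theorem~\ref{thm:ri}, but the sparse setting is far simpler: coset averaging and the right-invariance of the Haar measure are no longer needed, and the only real ingredient is Lemma~\ref{lma:tx}. First I would record how the sparse signal transforms under a rotation. If the input cloud is rotated by an arbitrary $Q\in SO(3)$, then $x_i\mapsto Qx_i$ for every $i$, and since rotations are volume-preserving we have $[L_Qf](x)=f(Q^{-1}x)=\frac1N\sum_i\delta(Q^{-1}x-x_i)=\frac1N\sum_i\delta(x-Qx_i)$, i.e.\ $L_Qf$ is exactly the empirical distribution of the rotated cloud $\{Qx_i\}$. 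The evaluation point likewise becomes $Qx_j$.

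Next I would substitute these into the closed form of the sparse correlation (Equation~\ref{eq:sprin}):
\[
  [\psi\ast L_Qf](Qx_j)=\frac1N\sum_{i=1}^N\psi\bigl(\mathcal{T}(Qx_j)^{-1}Qx_i\bigr).
\]
The key step is to apply Lemma~\ref{lma:tx} to the query point $x_j$: there is an angle $\theta=\theta(Q,x_j)$, depending only on $Q$ and $x_j$ and \emph{not} on the summation index $i$, such that $\mathcal{T}(Qx_j)=Q\,\mathcal{T}(x_j)\,Z(\theta)$, hence $\mathcal{T}(Qx_j)^{-1}=Z(-\theta)\,\mathcal{T}(x_j)^{-1}\,Q^{-1}$. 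Plugging this in, the two factors of $Q$ cancel and each summand becomes $\psi\bigl(Z(-\theta)\,\mathcal{T}(x_j)^{-1}x_i\bigr)$.

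Finally I would invoke the filter hypothesis $\psi(x)\equiv\psi(Z(\theta)x)$, which holds for every angle---in particular for $-\theta$---so that $\psi\bigl(Z(-\theta)\,\mathcal{T}(x_j)^{-1}x_i\bigr)=\psi\bigl(\mathcal{T}(x_j)^{-1}x_i\bigr)$ term by term; summing over $i$ then gives $[\psi\ast L_Qf](Qx_j)=[\psi\ast f](x_j)$, the claimed point-wise rotation invariance. There is no serious obstacle here; the only points requiring a little care are (i) that the $\theta$ furnished by Lemma~\ref{lma:tx} is a single angle common to all $N$ terms, so it can be stripped off uniformly, and (ii) that the Dirac-delta manipulation above is really the statement that the push-forward of the empirical measure under the volume-preserving map $Q$ introduces no extra normalizing constant.
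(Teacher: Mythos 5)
Your proposal is correct and follows essentially the same route as the paper's proof: both hinge on Lemma~\ref{lma:tx} to write $\mathcal{T}(Qx_j)^{-1}=Z(-\theta)\,\mathcal{T}(x_j)^{-1}Q^{-1}$ and then strip off $Z(-\theta)$ using the left-coset constancy of $\psi$. The only (cosmetic) difference is that you carry out the cancellation of $Q$ directly in the finite sum over the rotated points, whereas the paper works in the integral form and absorbs $Q$ by a change of variables using the rotation invariance of the measure on $S^2\times H$.
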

\begin{proof}
\begin{align}
    &\hspace{0.5em}[\psi\ast L_Qf](Qx_j) \\
    =& \int_{S^2\times H} \psi(\mathcal{T}(Qx_j)^{-1}x)f(Q^{-1}x)dx \\
    =& \int_{S^2\times H} \psi(Z(-\theta)\mathcal{T}(x_j)^{-1}Q^{-1}x)f(Q^{-1}x)dx \\
    =& \int_{S^2\times H} \psi(\mathcal{T}(x_j)^{-1}x)f(x)dx \\ 
    =&\hspace{0.5em}[\psi\ast f](x_j).
\end{align}
\end{proof}

In practice, we sum $\psi$ over $k$ nearest neighbors of $x_j$ to reduce memory footprint.  $\psi$ is implemented as fully connected layers, which is constant on the latitude. Since $[\psi\ast f](x_j)$ can be interpreted as the expectation $\mathbb{E}_{x\sim f(x)}[\psi(\mathcal{T}(x_j)^{-1}x)]$, we randomly select a subset of $k$ nearest neighbors for filtering without bias. This trick is also mentioned in \cite{kim2020rotation} as dilated $k$NN module.

In order to include more context, besides $\mathcal{T}(x_j)^{-1}x_i$, we augment $\psi$ with six rotation invariant features, which are sides and inner-angles of the triangle formed by $x_i$, $x_j$ and $\frac{1}{N}\sum_i^Nx_i$. Though some other rotation invariant features are also available, such as moments $\frac{1}{N}\sum_i^Nx_i^2$, $\frac{1}{N}\sum_i^Nx_i^3$, we empirically find that the triangle formed by $x_i$, $x_j$ and $\frac{1}{N}\sum_i^Nx_i$ gives a better result by the ablation study. We suspect this is because the triangle sides and angles provide more structural information than simple moments.

\subsection{Architecture}
SPRIN architecture is shown in Figure~\ref{fig:sprin}. It progressively conducts sparse rotation invariant spherical correlation~\ref{eq:scorr} between every point and its $k$ nearest neighbors. The dilated $k$NN module is leveraged, which is similar to that of Kim \etal\cite{kim2020rotation}. Given a center point $x_j$ and a dilation rate $d$, the output of the dilated $k$NN search is an index set of $k/d$ points, which are randomly chosen from the nearest $k$ neighbors. To hierarchical propagate low-level information to a larger high-level region, we leverage a similar operation \textit{set abstraction} with PointNet++\cite{NIPS2017_7095}. Our \textit{set abstraction} layer only contains the farthest point sampling module, which defines the evaluating point of following sparse spherical correlation layers. 

For part segmentation, we incorporate two feature propagation layers are concatenated to up-sample the point cloud. This operation is similar to that in PointNet++\cite{NIPS2017_7095} while our model does not induce any weight interpolation. It directly conducts sparse spherical correlation at those up-sampled points.

\begin{figure*}[ht]
\begin{center}
\includegraphics[width=\linewidth]{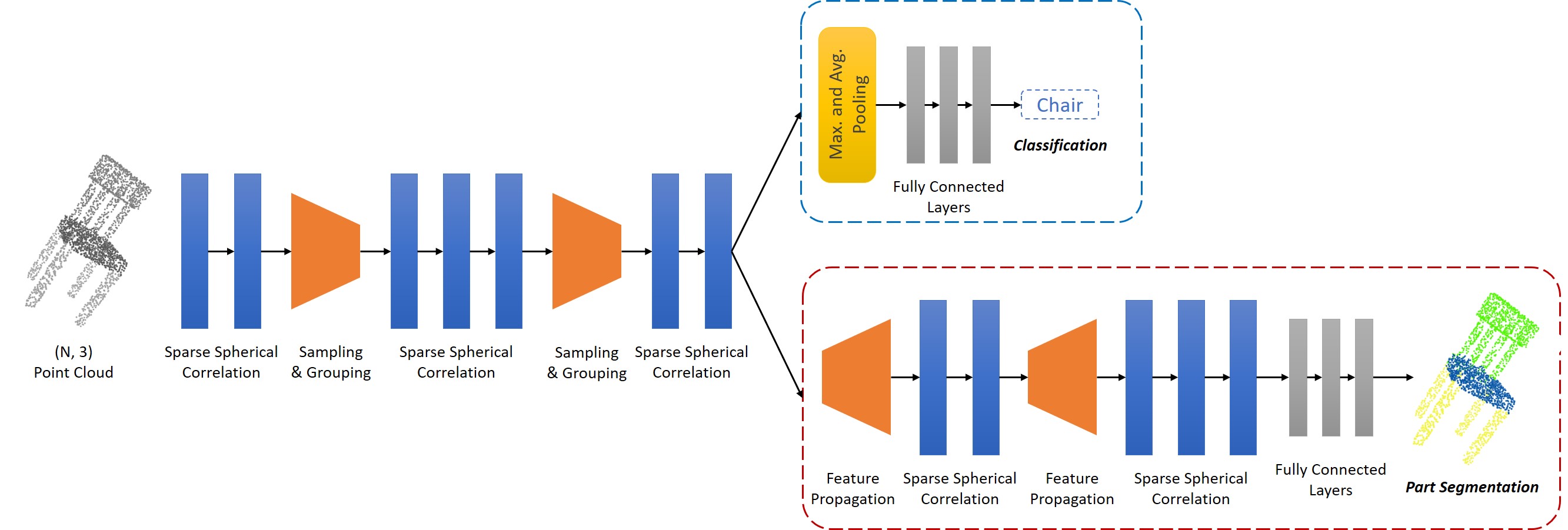}
\end{center}
\caption{\textbf{SPRIN Architecture.}  
SPRIN directly operates on sparse point clouds with several sparse spherical correlation layers. Farthest point sampling and kNN grouping are leveraged to aggregate information from low level to high level. Max and average pooling layers follows to extract global features, and finally a classification score is given by fully-connected layers. 
}
\label{fig:sprin}
\end{figure*}

\section{Experiments}
\label{seq:experiments}

In this section, we arrange comprehensive experiments to evaluate PRIN for point cloud analysis on different tasks. First, we demonstrate that our model can be applied to 3D shape part segmentation and classification with random rotations. Then, we provide some applications on 3D point matching and shape alignment. At last, we conduct an ablation study to validate the design of each part.

\subsection{Implementation Details.} PRIN and SPRIN are all implemented in Python using PyTorch on one NVIDIA GTX 1080 Ti. 
Adam~\cite{kingma2014adam} optimization algorithm with learning rate 1e-2 is employed for PRIN, with a batch size of 8. Adam optimization algorithm with learning rate 1e-3 is employed for SPRIN, with a batch size of 16. The learning rate for PRIN decays with a rate of 0.5 every 5 epochs. 

For PRIN, we set $\xi=1/32$. One spherical voxel convolution layer is utilized with 40 channels. It is followed by two $SO(3)$ group convolution layers with channel 40 and 50. All layers share the same bandwidth, which is 32. Two fully-connected layers with channels 50, 50 are concatenated for part segmentation. 

For SPRIN, the first two sparse spherical correlation layers are evaluated on all input points, with 2-strided 64-NN neighborhoods. Next three sparse spherical correlation layers are evaluated on 128 sampled points from the sampling \& grouping module, with 3-strided 72-NN, 1-strided 32-NN and 1-strided 32-NN neighborhoods, respectively. Then three sparse spherical correlation layers are evaluated on 32 sampled points from the sampling \& grouping module, with 1-strided 32-NN neighborhoods. For classification, the max and avg pooling layer concatenates max-pooling and avg-pooling results, followed by three  fully-connected layers, with 256, 64, 40 channels respectively. For part segmentation, two sparse spherical correlation layers with 1-strided 16-NN and 1-strided 32-NN are proposed at the first up-sampling level. Three sparse spherical correlation layers with 1-strided 32-NN, 2-strided 48-NN and 3-strided 96-NN are proposed at the second up-sampling level. In the end, three fully connected layers with channels 128, 256, 50 are concatenated.

\subsection{Part Segmentation on Rotated Shapes}
\label{seq:part}

\subsubsection{Dataset} ShapeNet part dataset~\cite{Yi16} contains 16,881 shapes from 16 categories in which each shape is annotated with expert verified part labels from 50 different labels in total. Most shapes are composed of two to five parts. We follow the data split in PointNet~\cite{8099499}. 2048 points are sampled per shape for both PRIN and SPRIN.

\subsubsection{Evaluation Results}

Part segmentation is a challenging task for rotated shape analysis. We compare our method with various baseline methods and train each model on the non-rotated training set and evaluate them on the non-rotated and rotated test set. All the results of baselines are either taken from their original papers or, if not available, reproduced with their released code. RIConv~\cite{zhang2019rotation}, Kim \etal~\cite{kim2020rotation} and Li \etal~\cite{li2021rotation} reported their results on arbitrary rotated test shapes with z-axis data augmentation at training time. In order to make a fair comparison, we re-trained and re-evaluated their models with no rotation augmentation at training time.

Qualitative results are shown in Figure~\ref{fig:main}. Both PRIN and SPRIN can accomplish part segmentation task on randomly rotated shapes without seeing these rotations. Even though state-of-the-art vanilla deep learning networks like PointNet++~\cite{qi2017pointnet++}, DGCNN~\cite{dgcnn} and SpiderCNN~\cite{xu2018spidercnn} can achieve fairly good results, they fail on the rotated point clouds. Besides, traditional local features based on local reference frames (LRF) like CGF~\cite{khoury2017learning} and SHOT~\cite{shot} concatenated are also compared. For these descriptors, we train a PointNet-like network with CGF/SHOT features as the input. Although these descriptors are rotation invariant, their performance is inferior to ours. Point-level rotation invariant convolutions such as RIConv~\cite{zhang2019rotation} and Kim \etal~\cite{kim2020rotation} are trained with $z$-axes rotation augmentation but still inferior to our SPRIN network.






Table~\ref{tab:compare_miou} shows quantitative results of PRIN/SPRIN and baseline methods. We compare both mean instance IoU and mean class IoU for all the methods. Our SPRIN network even approaches the performance of modern deep learning methods when tested with no rotation.
Figure~\ref{fig:main} gives some visualization of results between state-of-the-art deep learning methods and PRIN/SPRIN over the ShapeNet part dataset. Biased by the canonical orientation of point clouds in the training set, networks like PointNet just learn a simple partition of Euclidean space, regardless of how the object is oriented. RIConv gives inaccurate segmentation results on part boundaries, while our methods are able to give accurate results under arbitrary orientations.

It should be also noticed that both PRIN and SPRIN have little performance drop when evaluated on the rotated test set, compared with that on the non-rotated set. This is also consistent with our theoretical analysis.



\begin{figure*}[ht]
\begin{center}
\includegraphics[width=\linewidth]{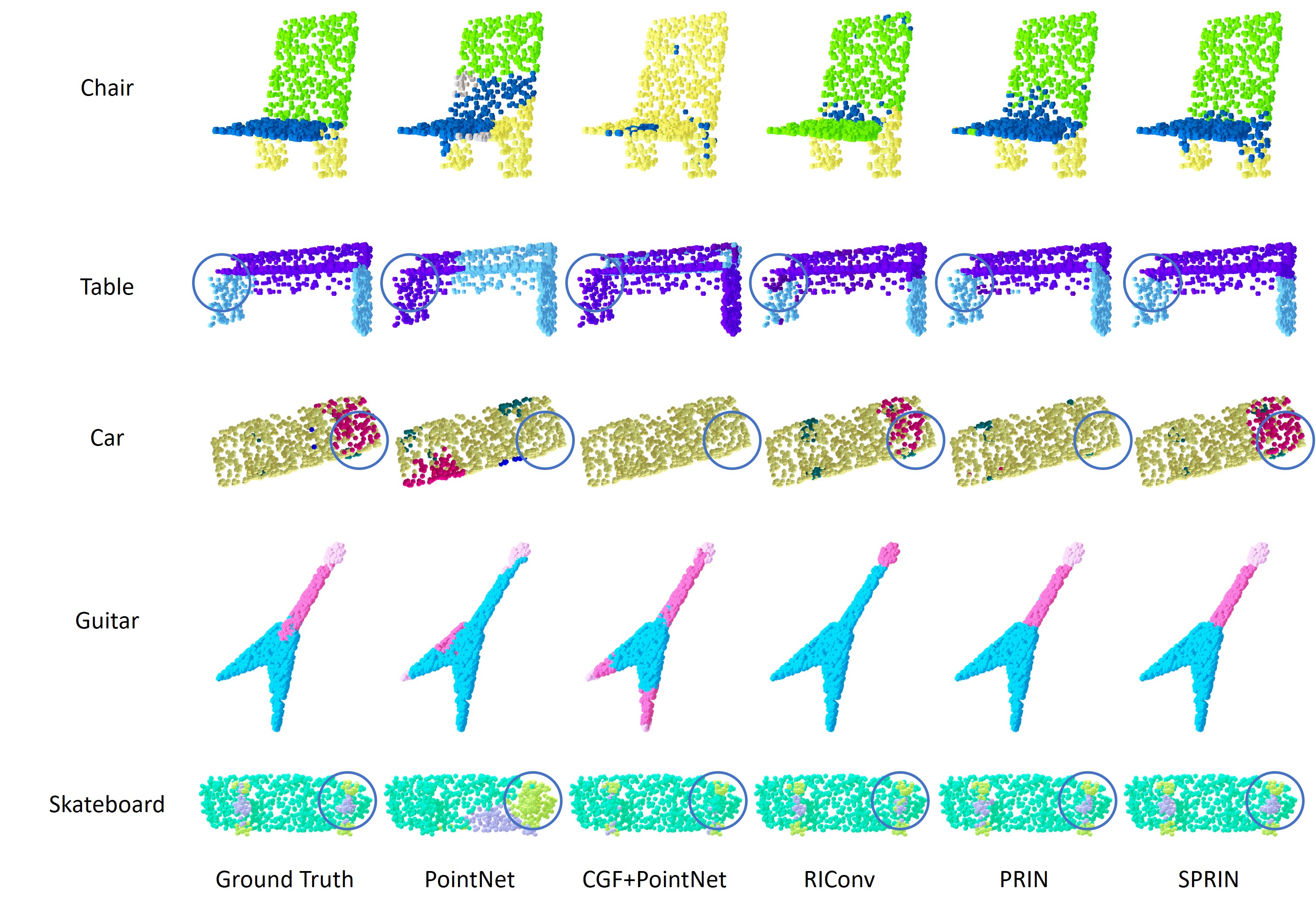}
\end{center}
\caption{\textbf{Visualization of results.} We compare the results of various methods on rotated point clouds, which are trained on the non-rotated dataset. Both PRIN and SPRIN generalize well to unseen orientations.}
\label{fig:main}
\end{figure*}


\subsection{Classification on Rotated Shapes}
In this section, we evaluate our method on ModelNet40 and ScanObjectNN, which are synthetic and real-world datasets respectively.
\subsubsection{Dataset} ModelNet40~\cite{wu20153d} is a 3D shape classification dataset which contains 12,308 shapes from 40 categories. Here, we use its corresponding point clouds dataset provided by PointNet\cite{8099499}.  ScanObjectNN~\cite{uy2019revisiting} is a real-world point cloud object dataset, where each object is retrieved from real scans with potential clutter backgrounds. 2048 points are sampled per shape for both PRIN and SPRIN.

\begin{table}[h]
\begin{center}
\begin{tabularx}{\linewidth}{p{2.5cm}|p{2.4cm}<{\centering}|p{1.6cm}<{\centering}|p{1.6cm}<{\centering}}
\toprule[1pt]
Method & Input & NR & AR \\
\hline
PointNet~\cite{8099499} & \multirow{12}*{\tabincell{c}{Point XYZ}} & 88.45 & 12.47 \\
PointNet++~\cite{qi2017pointnet++} & ~ & 89.82 & 21.35 \\
PCNN~\cite{Atzmon:2018:PCN:3197517.3201301} & ~ & 92.30 & 17.38\\
Point2Sequence~\cite{liu2019point2sequence} & ~ & 92.60 & 10.53 \\
Kd-Network~\cite{8237361} & ~ & 86.20 & 8.49 \\
Spherical CNN~\cite{cohen2018spherical}  & ~ & 81.73 & 55.62\\
DeepSets~\cite{zaheer2017deep} & ~ & 88.73 & 9.72 \\
LDGCNN~\cite{zhang2019linked} & ~ & 92.91 & 17.82\\
SO-Net~\cite{li2018so} & ~ & \textbf{93.44} & 9.64\\
Thomas \etal~\cite{thomas2018tensor} & ~ & 31.24 & 32.29 \\
QE-Net~\cite{zhao2020quaternion} & ~ & 74.43 & 74.07 \\
SPHNet~\cite{poulenard2019effective} & ~ & 87.70 & 86.60 \\
SRINet~\cite{sun2019srinet} & ~ & 87.01 & 87.01 \\ 
RIConv~\cite{zhang2019rotation} & ~ & 87.56 & 87.24 \\
Kim \etal~\cite{kim2020rotation} & ~ & 88.49 & 88.40 \\
Li \etal~\cite{li2021rotation} & ~ & 89.40 & \textbf{89.32} \\
\hline
\textbf{Ours(PRIN)} & \multirow{2}*{\tabincell{c}{Point XYZ}} &  79.76 & 72.43\\
\textbf{Ours(SPRIN)} & ~ &  86.01 & 86.13\\
\hline
SHOT+PointNet~\cite{shot} & \multirow{2}*{\tabincell{c}{Local Features}}& 48.79 & 48.79\\
CGF+PointNet~\cite{khoury2017learning} & ~ & 57.70 & 57.89\\
\bottomrule[1pt]
\end{tabularx}
\end{center}
\caption{\textbf{Classification results on ModelNet40 dataset.} Performance is evaluated in average instance accuracy. NR means to train with no rotations and test with no rotations. AR means to train with no rotations and test with arbitrary rotations.}
\label{tab:classify}
\end{table}

\begin{table}[h]
\begin{center}
\begin{tabularx}{\linewidth}{p{2.5cm}|p{2.4cm}<{\centering}|p{1.6cm}<{\centering}|p{1.6cm}<{\centering}}
\toprule[1pt]
Method & Input & NR & AR \\
\hline
PointNet~\cite{8099499} & \multirow{12}*{\tabincell{c}{Point XYZ}} & 73.32 & 21.35 \\
PointNet++~\cite{qi2017pointnet++} & ~ & 82.31 & 18.61 \\
SO-Net~\cite{li2018so} & ~ & 86.18 & 13.26\\
SPHNet~\cite{poulenard2019effective} & ~ & 67.43 & 66.52 \\
SRINet~\cite{sun2019srinet} & ~ & 69.53 & 69.18 \\ 
RIConv~\cite{zhang2019rotation} & ~ & 69.32 & 68.44 \\
Kim \etal~\cite{kim2020rotation} & ~ & 68.76 & 68.65 \\
Li \etal~\cite{li2021rotation} & ~ & \textbf{73.42} & \textbf{73.44} \\
\hline
\textbf{Ours(PRIN)} & \multirow{2}*{\tabincell{c}{Point XYZ}} &  58.43 & 52.14\\
\textbf{Ours(SPRIN)} & ~ & 70.12 & 69.83\\
\hline
SHOT+PointNet~\cite{shot} & \multirow{2}*{\tabincell{c}{Local Features}}& 7.43 & 7.43\\
CGF+PointNet~\cite{khoury2017learning} & ~ & 2.87 & 2.87\\
\bottomrule[1pt]
\end{tabularx}
\end{center}
\caption{\textbf{Classification results on ScanObjectNN dataset.} Performance is evaluated in average instance accuracy.}
\label{tab:classifyreal}
\end{table}

\subsubsection{Evaluation Results}
Though classification does not require point-wise rotation invariant features but a global feature, our network still benefits from DAAS and SVC.

We compare PRIN/SPRIN with several state-of-the-art deep learning methods that take point Euclidean coordinates and local rotation invariant features as the input. All the results of baselines are either taken from their original papers or, if not available, reproduced with their released code. RIConv~\cite{zhang2019rotation}, Kim \etal~\cite{kim2020rotation} and Li \etal~\cite{li2021rotation} reported their results on arbitrary rotated test shapes with z-axis data augmentation at training time. In order to make a fair comparison, we re-trained and re-evaluated their models with no rotation augmentation at training time.

Local rotation invariant features like CGF and SHOT are inferior to our method. The results are shown in Table~\ref{tab:classify} and Table~\ref{tab:classifyreal}. Almost all other deep learning methods fail to generalize to unseen orientations, except for a few recent rotation invariant models. Our model achieves competitive results with current state-of-the-arts, on both ModelNet40 and ScanObjectNN. 


\subsection{Application}

\begin{figure}[h]
\begin{center}
  \includegraphics[width=\linewidth]{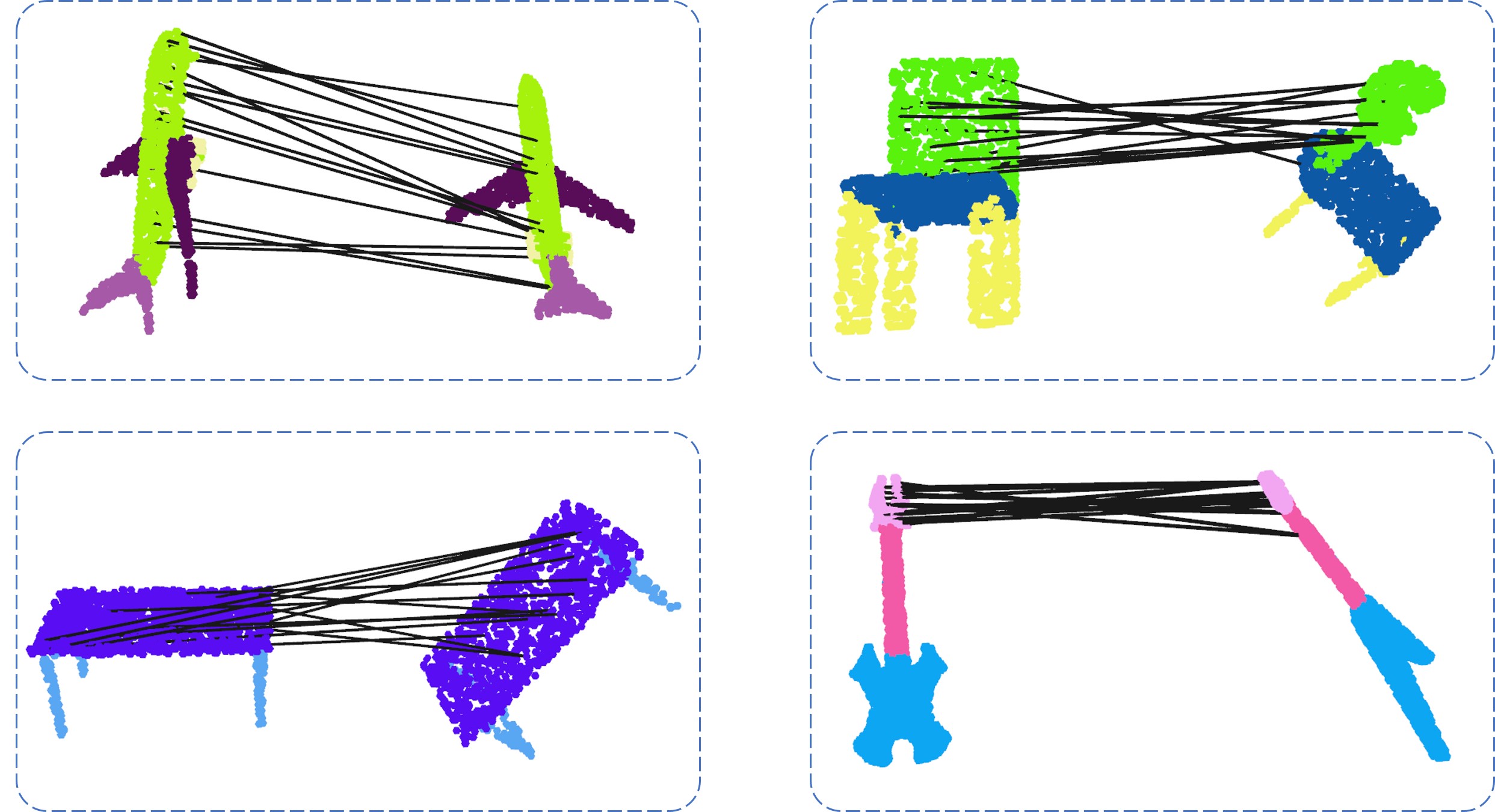}
\end{center}
\caption{\textbf{3D point matching results for PRIN.} Point matching results between the query object features (left) and the retrieved ones (right) under different orientations.}
\label{fig:matching}
\end{figure}

\noindent\textbf{3D Rotation Invariant Point Descriptors.}
For 3D objects, SHOT, CGF and other local features exploits local reference frames (LRF) as local descriptors. Similarly, PRIN/SPRIN is able to produce high quality rotation invariant 3D point descriptors, which is pretty useful as pairwise matching become possible regardless of rotations. In this application, we retrieve the closest descriptor from similar objects under arbitrary orientations. After that, point-level matching is conducted based on these rotation invariant point descriptors. A demonstration is shown in Figure \ref{fig:matching}. 
Through this matching process, we can find out where each point of the query object locates on another object. Moreover, such 3D point descriptors could have the potentiality to do scene searching and parsing as the degree of freedom reduces from six to three, leaving only translations.

To test the matching accuracy on ShapeNet part dataset, we compute and store point-wise features of 80\% test objects (non-rotated) in ShapeNet as a database. Then we compute point-wise features of the other 20\% test objects that are randomly rotated as queries. Feature retrievals are done by finding queries' nearest neighbors in the database. We evaluate against three different baselines to see if points corresponding to the same part are matched. The results are summarized in Table~\ref{tab:desc}. Both PRIN and SPRIN outperform baseline methods by a large margin.
\begin{table}[h]
\begin{center}
\begin{tabular}{l|c}
\hline
Method & Matching Accuracy \\
\hline
PointNet~\cite{8099499} & 38.91\\
SHOT~\cite{shot} & 17.11 \\
CGF~\cite{khoury2017learning} & 52.51\\
\hline
\textbf{Ours(PRIN)} & \textbf{86.12} \\
\textbf{Ours(SPRIN)} & \textbf{78.92} \\
\hline
\end{tabular}
\end{center}
\caption{\textbf{3D descriptor matching results for various methods.} Accuracy is the number of matched points corresponding to the same part divided by the number of all matched points.}
\label{tab:desc}
\end{table}

\begin{figure}[h]
\begin{center}
  \includegraphics[width=\linewidth]{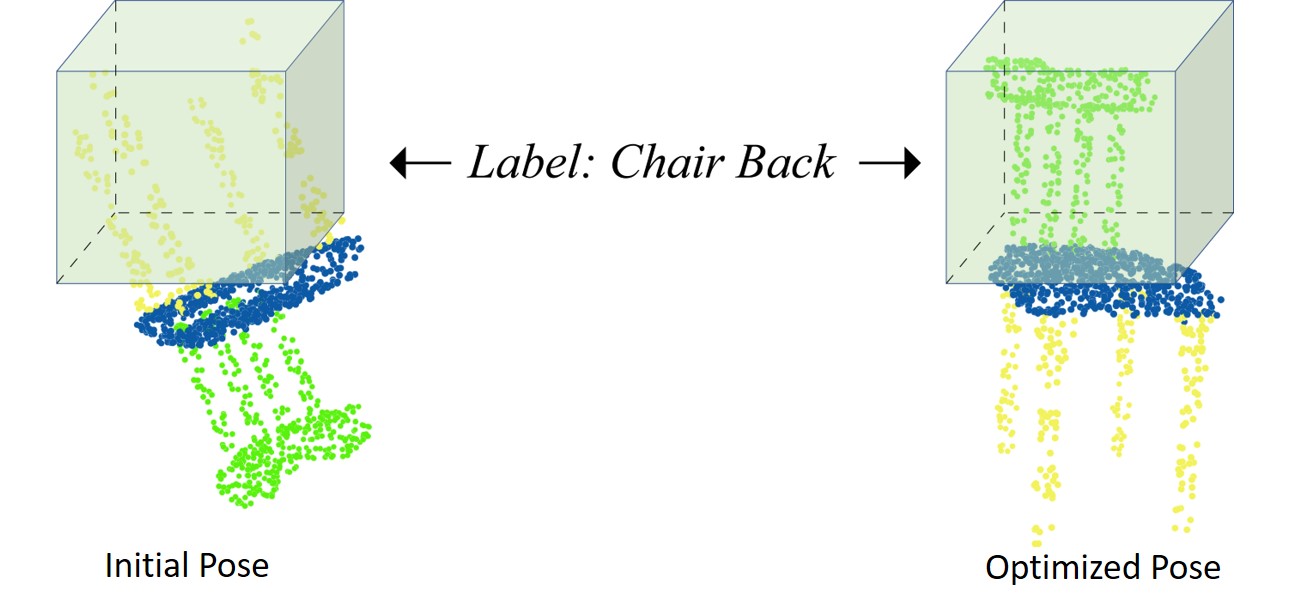}
\end{center}
\caption{\textbf{Chair alignment with its back on the top.} \textbf{Left:} A misalignment induces large KL divergence. \textbf{Right:} Required labels fulfilled with small KL divergence.}
\label{fig:alignment}
\end{figure}

\noindent\textbf{Shape Alignment with Label Priors.}
We now introduce a task that, given some label prior in the Euclidean space, the goal is to align the point cloud to satisfy this requirement. For example, one may align a chair so that its back is on the top. So we add virtual points describing the label requirement. Once the KL divergence between the predicted scores and the ground-truth labels of these virtual points is minimized, the chair is aligned with its back on the top. This is demonstrated in Figure \ref{fig:alignment}.

\subsection{Ablation Study}

In this section, we evaluate numerous variations of our method to verify the rationality of network design. Experiment results are summarized in Table \ref{tab:ablation}, \ref{tab:robust} and Figure \ref{fig:robustness}.
\begin{table}[h]
\begin{center}
\begin{tabularx}{\linewidth}{p{2.5cm}<{\centering}|p{2.5cm}<{\centering}|p{3cm}<{\centering}}
\toprule[1pt]
\tabincell{c}{Bandwidth}  & DAAS & PRIN Acc./mIoU \\
\hline
\textbf{32} & \textbf{Yes} & \textbf{85.99/71.20} \\
16 & Yes & 84.84/68.09 \\
8 & Yes & 82.58/65.43 \\
4 & Yes & 72.20/52.98 \\
\hline
32 & No & 83.13/67.47 \\
\bottomrule[1pt]
\end{tabularx}
\end{center}
\caption{\textbf{Ablation study.} Test accuracy/mIoU of PRIN on the rotated ShapeNet part dataset. Models with various bandwidths and sampling strategies are tested.}
\label{tab:ablation}
\end{table}

\subsubsection{Network Bandwidth} One decisive factor of our network is the bandwidth. Bandwidth is used to describe the sphere precision, which is also the resolution on $S^2\times H$. Mostly, large bandwidth offers more details of spherical voxels, such that our network can extract more specific point features of point clouds. While large bandwidth assures more specific representation of part knowledge, more memory cost is accompanied. Increasing bandwidth from 4 to 32 leads to a relative improvement of 16.04\% and 25.59\% on accuracy and mIoU, which is shown in Table \ref{tab:ablation}.

\subsubsection{DAAS v.s. Uniform Sampling} Recall that in Equation~\ref{eq:change}, we construct our signals on each spherical voxel with a density aware sampling factor. We now compare it with a baseline where uniform sampling is applied and results are given in the last row of Table \ref{tab:ablation}. We see that using the $sin(\beta)$ corrected sampling filter gives superior performance, which confirms our theory.

We also compare extracted features of DAAS and uniform sampling qualitatively in Figure~\ref{fig:feature}. After random rotations of the input canonical point cloud, our DAAS sampled signal obtains much better rotation equivariance than the uniform sampled signal.

\begin{figure}[h]
\begin{center}
   \includegraphics[width=0.95\linewidth]{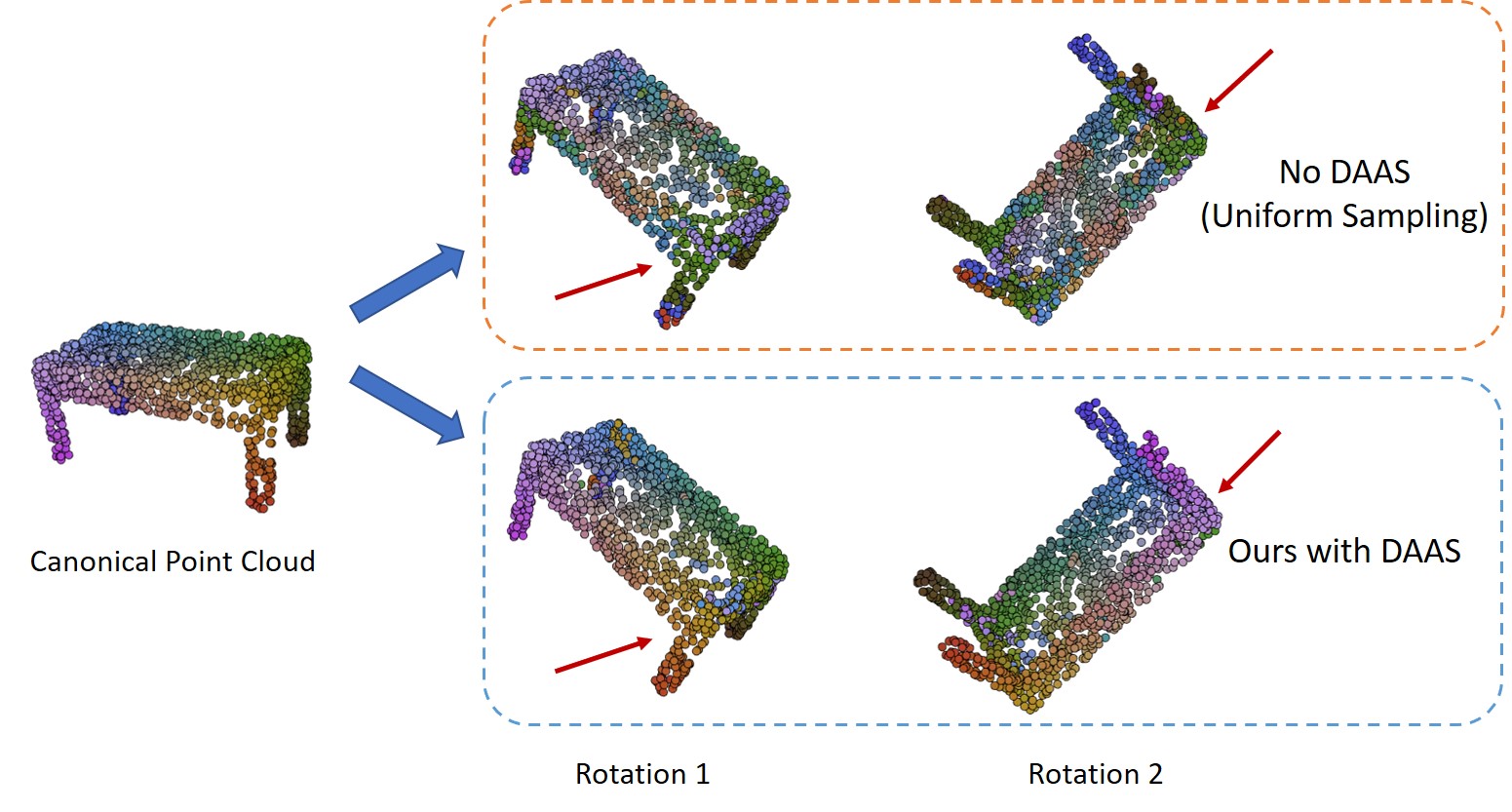}
\end{center}
   \caption{\textbf{Feature visualization of rotated point clouds.} The input point cloud is rotated twice and sampled with DAAS or uniform sampling for further computation. Colors indicate correspondence in the feature space. Our DAAS sampled signal achieves much better correspondence than uniform sampling.}
\label{fig:feature}
\end{figure}

\begin{figure}[h]
\begin{center}
   \includegraphics[width=0.95\linewidth]{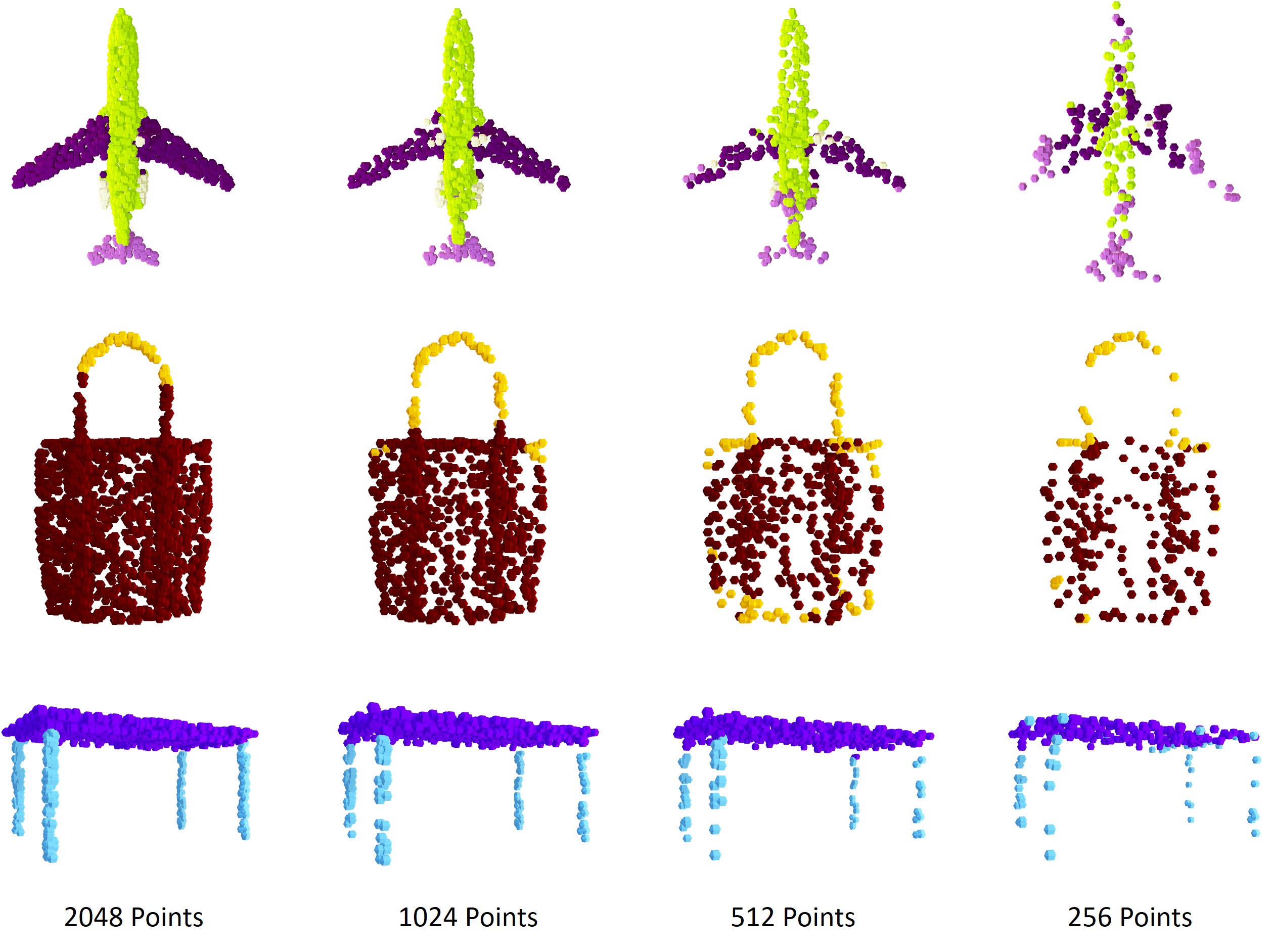}
\end{center}
   \caption{\textbf{Segmentation robustness visualization for PRIN.} \textbf{From left to right}: we sample a subset of 2048, 1024, 512, 256 points from test point clouds respectively. We observe that our method is robust to missing points and gives consistent results.}
\label{fig:robustness}
\end{figure}

\begin{table}[h]
\begin{center}
\begin{tabularx}{\linewidth}{p{2.5cm}<{\centering}|p{2.5cm}<{\centering}|p{3cm}<{\centering}}
\toprule[1pt]
Correlation Features  & Triangle Features & SPRIN Acc./mIoU \\
\hline
\checkmark & ~ & 90.75/79.70 \\
~ & \checkmark & 91.59/81.38 \\
\checkmark & \checkmark & \textbf{92.08/82.67} \\
\bottomrule[1pt]
\end{tabularx}
\end{center}
\caption{\textbf{Ablation study on the selection of rotation invariant features.} Test accuracy/mIoU of SPRIN on rotated ShapeNet part dataset.}
\label{tab:feature}
\end{table}

\begin{table}[h]
\begin{center}
\begin{tabularx}{\linewidth}{p{2.5cm}<{\centering}|p{2.5cm}<{\centering}|p{3cm}<{\centering}}
\toprule[1pt]
\tabincell{c}{\# of points}  & PRIN Acc./mIoU & SPRIN Acc./mIoU \\
\hline
2048 & \textbf{85.99/71.20} & \textbf{92.08/82.67} \\
1024 & 84.19/67.43 & 90.92/79.79 \\
512 & 71.19/51.50 & 85.59/72.04 \\
256 & 55.23/36.80 & 64.17/51.38 \\
\bottomrule[1pt]
\end{tabularx}
\end{center}
\caption{\textbf{Qualitative results for segmentation robustness.} Test accuracy/mIoU of PRIN/SPRIN on rotated ShapeNet part dataset. Various number of points are sub-sampled.}
\label{tab:robust}
\end{table}
\subsubsection{Selection of Rotation Invariant Features}
In SPRIN, we use the correlation feature $\mathcal{T}(x_j)^{-1}x_i$ and six invariant features of the triangle formed by $x_i$, $x_j$ and $\frac{1}{N}\sum_i^Nx_i$. We explore whether these features contribute to the final segmentation. Results are given in Table~\ref{tab:feature}.
\subsubsection{Segmentation Robustness}
PRIN and SPRIN also demonstrate some robustness to corrupted and missing points. Although the density of point clouds declines, our network still segments correctly for each point. Qualitative results are given in Table~\ref{tab:robust}. Both PRIN and SPRIN maintain a good accuracy up to 4x down-sampling (512 points). In Figure \ref{fig:robustness}, we can see that PRIN predicts consistent labels regardless of the point density. 

\section{Conclusion}

We present PRIN, a network that takes any input point cloud and leverages Density Aware Adaptive Sampling (DAAS) to construct signals on spherical voxels. Then Spherical Voxel Convolution (SVC) and Point Re-sampling follow to extract point-wise rotation invariant features. We place two different output heads to do both 3D point clouds classification and part segmentation. In addition, we extend PRIN to a sparse version called SPRIN, which directly operates on sparse point clouds. Our experiments show that both PRIN and SPRIN are robust to arbitrary orientations. Our network can be applied to 3D point feature matching and shape alignment with label priors. We show that our model can naturally handle arbitrary input orientations and provide detailed theoretical analysis to help understand our network.

\ifCLASSOPTIONcompsoc
  \section*{Acknowledgments}
\else
  \section*{Acknowledgment}
\fi
This work is supported in part by the National Key R\&D Program of China, No. 2017YFA0700800 and No. 2019YFC1521104, SHEITC (2018-RGZN-02046) and Shanghai Qi Zhi Institute. This work was also supported by the National Natural Science Foundation of China under Grant 51675342, Grant 51975350, Grant 61972157 and Grants 61772332.

\bibliographystyle{IEEEtran} \bibliography{main}

\begin{IEEEbiography}
    [{\includegraphics[width=1in,height=1.25in,clip,keepaspectratio]{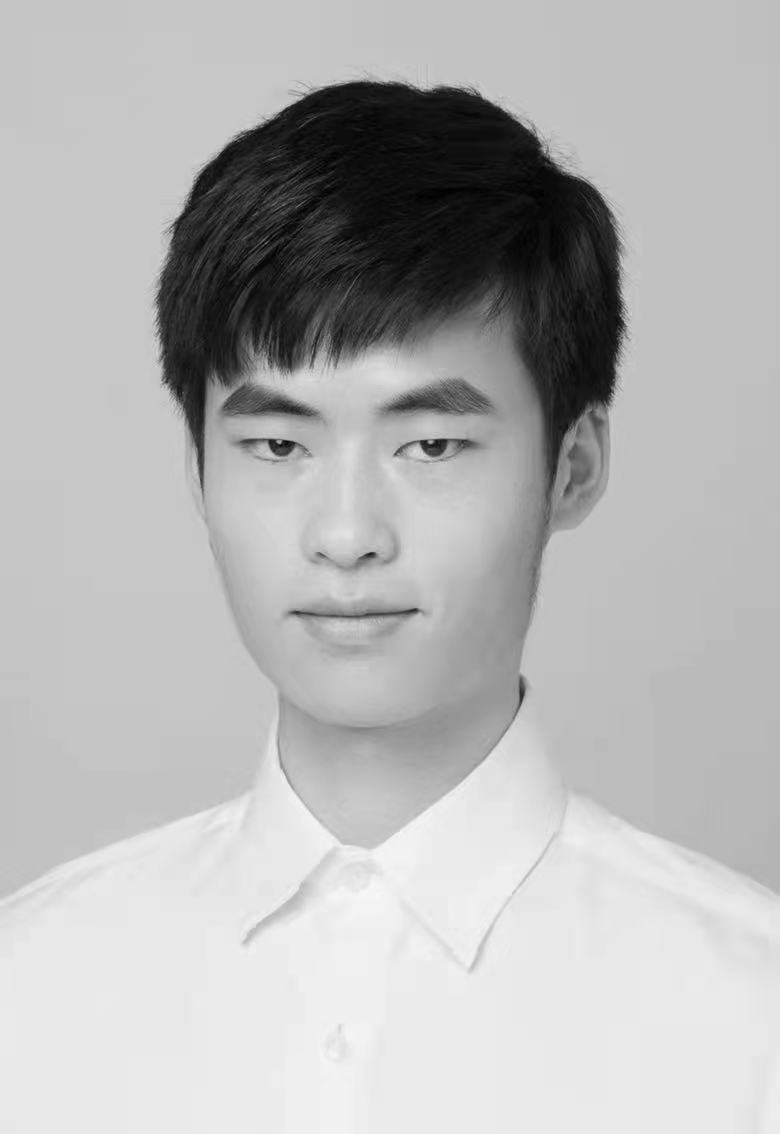}}]{Yang You}
received the BS and MS degrees from Shanghai Jiao Tong University and University of Virginia in 2016 and 2017 respectively. He is now a third-year student pursuing his doctoral degree at Mechanical Engineering department in Shanghai Jiao Tong University. His main interests lie in 3D Computer Vision,
Computer Graphics and Robotics.
\end{IEEEbiography}

\begin{IEEEbiography}
    [{\includegraphics[width=1in,height=1.25in,clip,keepaspectratio]{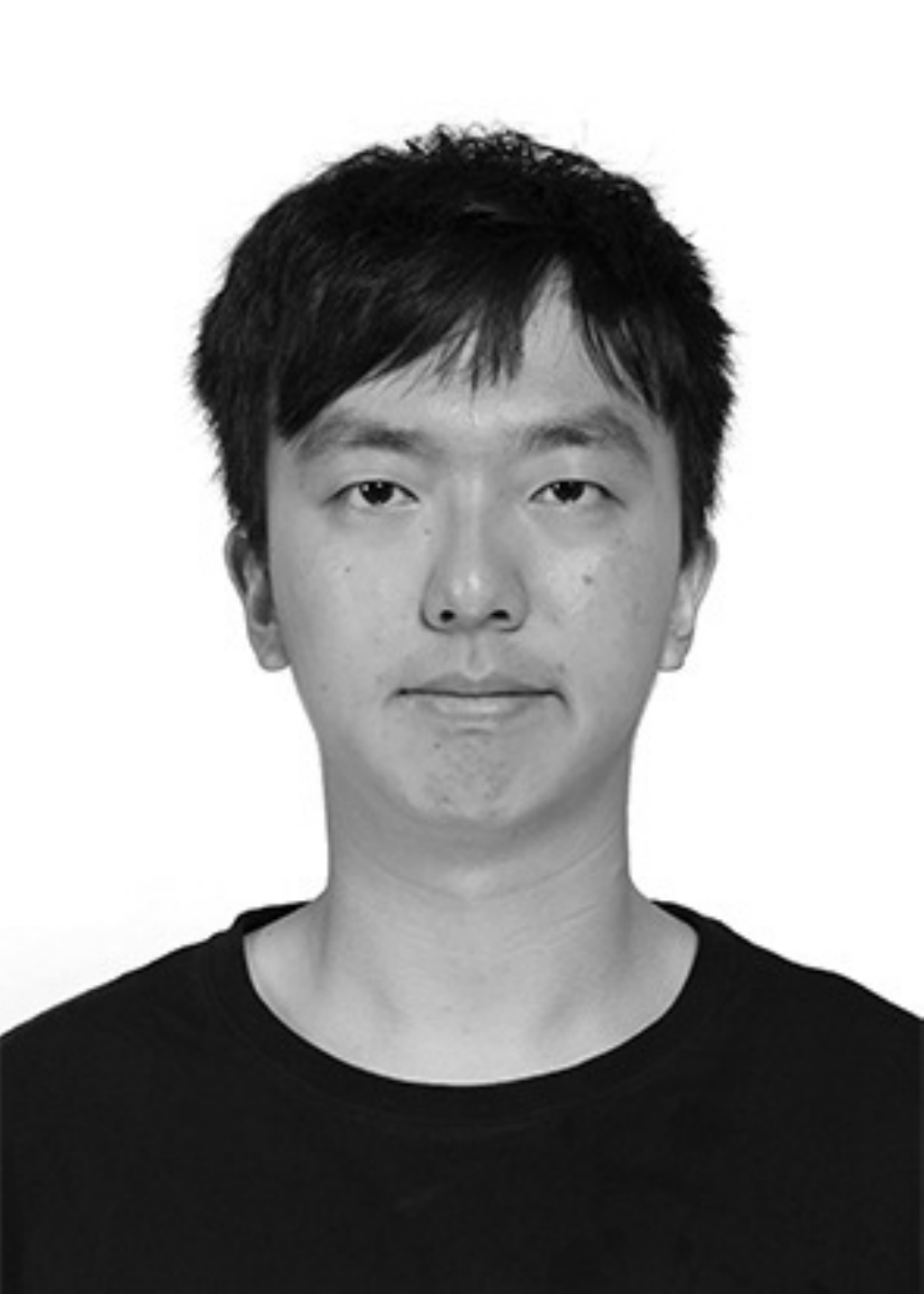}}]{Yujing Lou} 
received the B.S. degree in computer science and technology from Harbin Institute of Technology, China, in 2018 and the M.S. degree in computer science and technology from Shanghai Jiao Tong University, China, in 2020. He is currently a Ph.D. student with MVIG lab, Shanghai Jiao Tong University. His research interests include 3D scene/object perception and robot vision.
\end{IEEEbiography}

\begin{IEEEbiography}
    [{\includegraphics[width=1in,height=1.25in,clip,keepaspectratio]{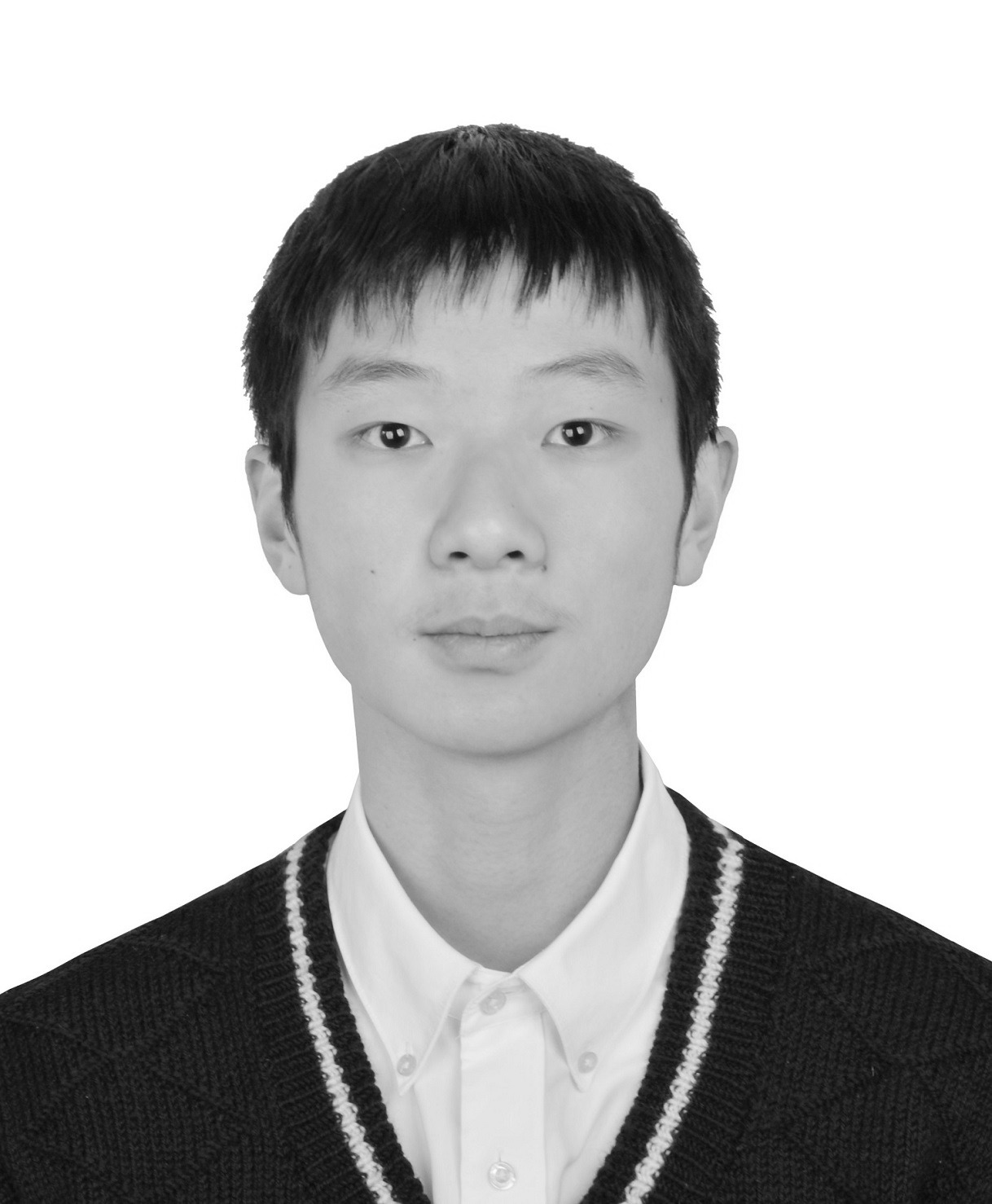}}]{Ruoxi Shi} 
is currently an undergraduate student in School of Electronic Information and Electrical Engineering at Shanghai Jiao Tong University, majoring in Artificial Intelligence. He has been working with Prof. Cewu Lu and Dr. Yang You at the Machine Vision and Intelligence Group since 2020. 
He is currently interested in 3D computer vision and deep learning.
\end{IEEEbiography}

\begin{IEEEbiography}
    [{\includegraphics[width=1in,height=1.25in,clip,keepaspectratio]{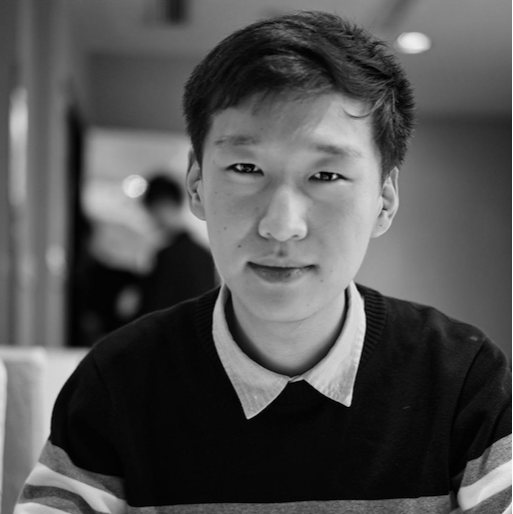}}]{Qi Liu} 
received his B.S. degrees from Dalian University of Technology in 2016. He is now a PhD candidate in Shanghai Jiao Tong University. His research interests include 3D computer vision and computer graphics.
\end{IEEEbiography}

\begin{IEEEbiography}
    [{\includegraphics[width=1in,height=1.25in,clip,keepaspectratio]{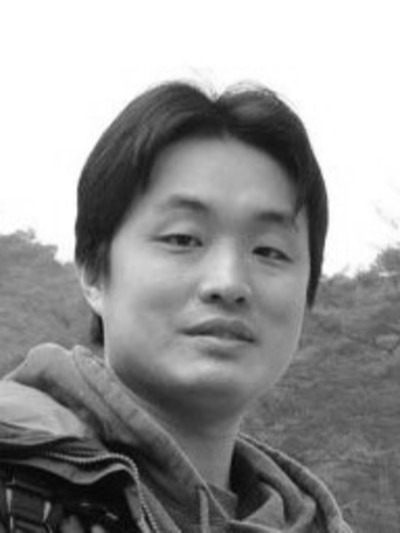}}]{Yu-Wing Tai} 
received his B.Eng. (First Class Honors) and M Phil degrees from the Department of Computer Science and Engineering, HKUST in 2003 and 2005 and PhD degree from the National University of Singapore in 2009. He is currently an Adjunct Professor at the Department of Computer Science and Engineering, HKUST. He is a research director of YouTu research lab of Social Network Group of Tencent. He was a principle research scientist of SenseTime Group Limited from September 2015 to December 2016. He was an associate professor at the Korea Advanced Institute of Science and Technology (KAIST) from July 2009 to August 2015. From Sept 2007 to June 2008, he worked as a full-time student internship in the Microsoft Research Asia (MSRA). He was awarded the Microsoft Research Asia Fellowship in 2007, and the KAIST 40th Anniversary Academic Award for Excellent Professor in 2011 respectively. His research interests include deep learning, computer vision and image/video processing.
\end{IEEEbiography}

\begin{IEEEbiography}
    [{\includegraphics[width=1in,height=1.25in,clip,keepaspectratio]{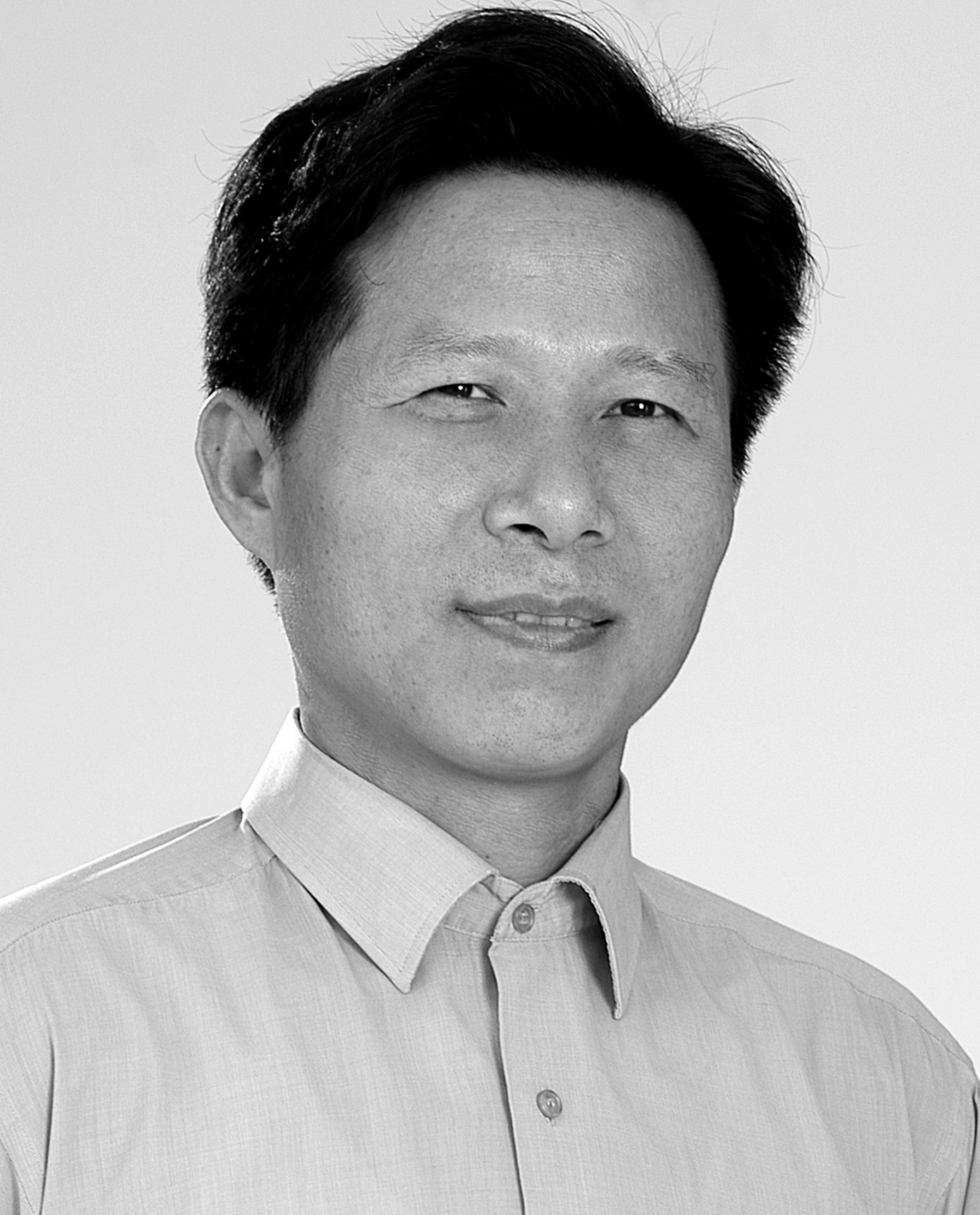}}]{Lizhuang Ma}
received his B.S. and Ph.D. degrees from the Zhejiang University, China in 1985 and 1991, respectively. He is now a Distinguished Professor, at the Department of Computer Science and Engineering, Shanghai Jiao Tong University, China and the School of Computer Science and Technology, East China Normal University, China. He was a Visiting Professor at the Frounhofer IGD, Darmstadt, Germany in 1998, and a Visiting Professor at the Center for Advanced Media Technology, Nanyang Technological University, Singapore from 1999 to 2000. His research interests include computer vision, computer aided geometric design, computer graphics, scientific data visualization,
computer animation, digital media technology, and theory and applications for computer graphics, CAD/CAM.
\end{IEEEbiography}

\begin{IEEEbiography}
    [{\includegraphics[width=1in,height=1.25in,clip,keepaspectratio]{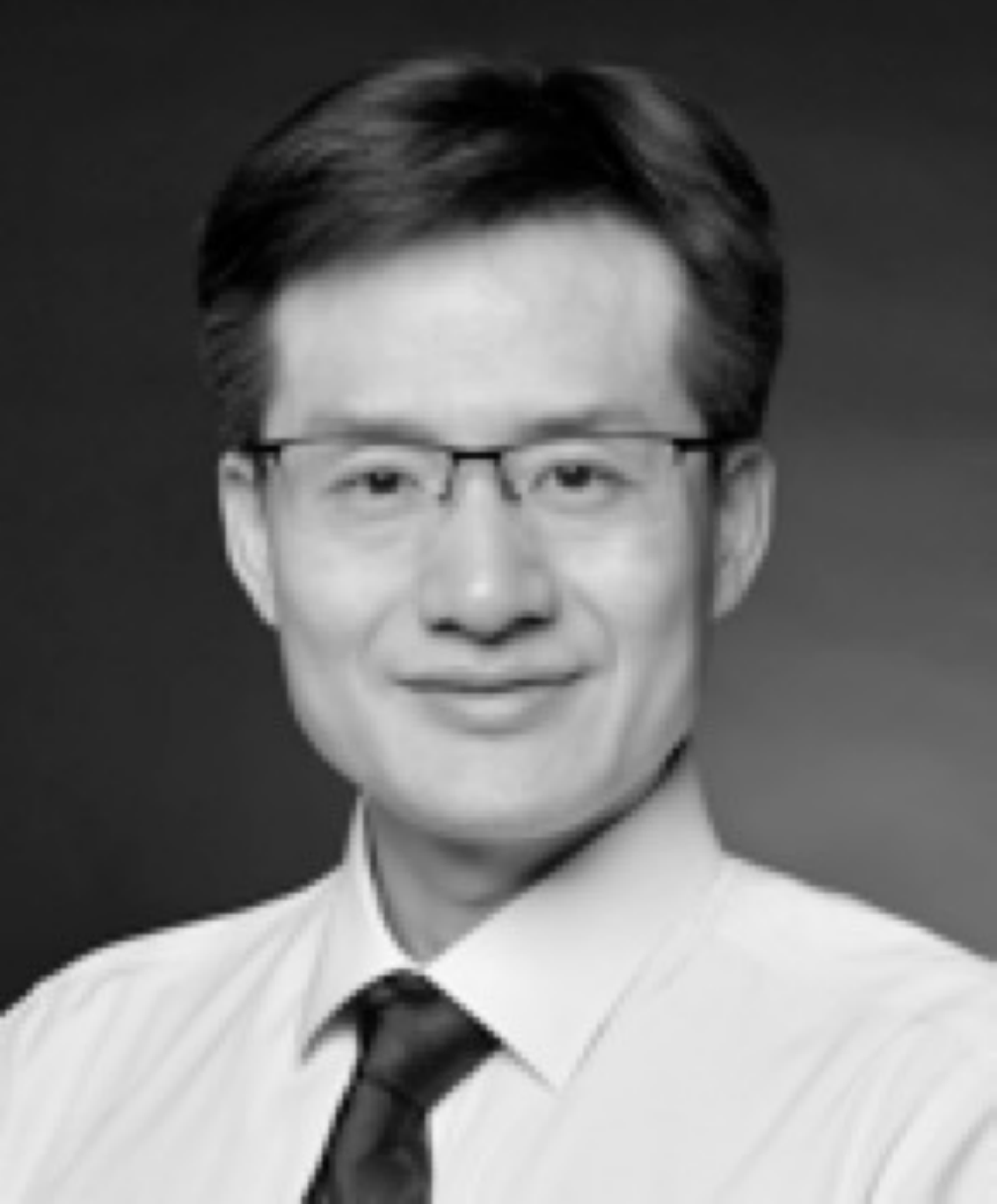}}]{Weiming Wang}
is a Professor with the School of Mechanical Engineering, Shanghai Jiao Tong University, Shanghai, China. His research interests include machine vision, ﬂexible robots, and human-robot interaction.
\end{IEEEbiography}

\begin{IEEEbiography}
    [{\includegraphics[width=1in,height=1.25in,clip,keepaspectratio]{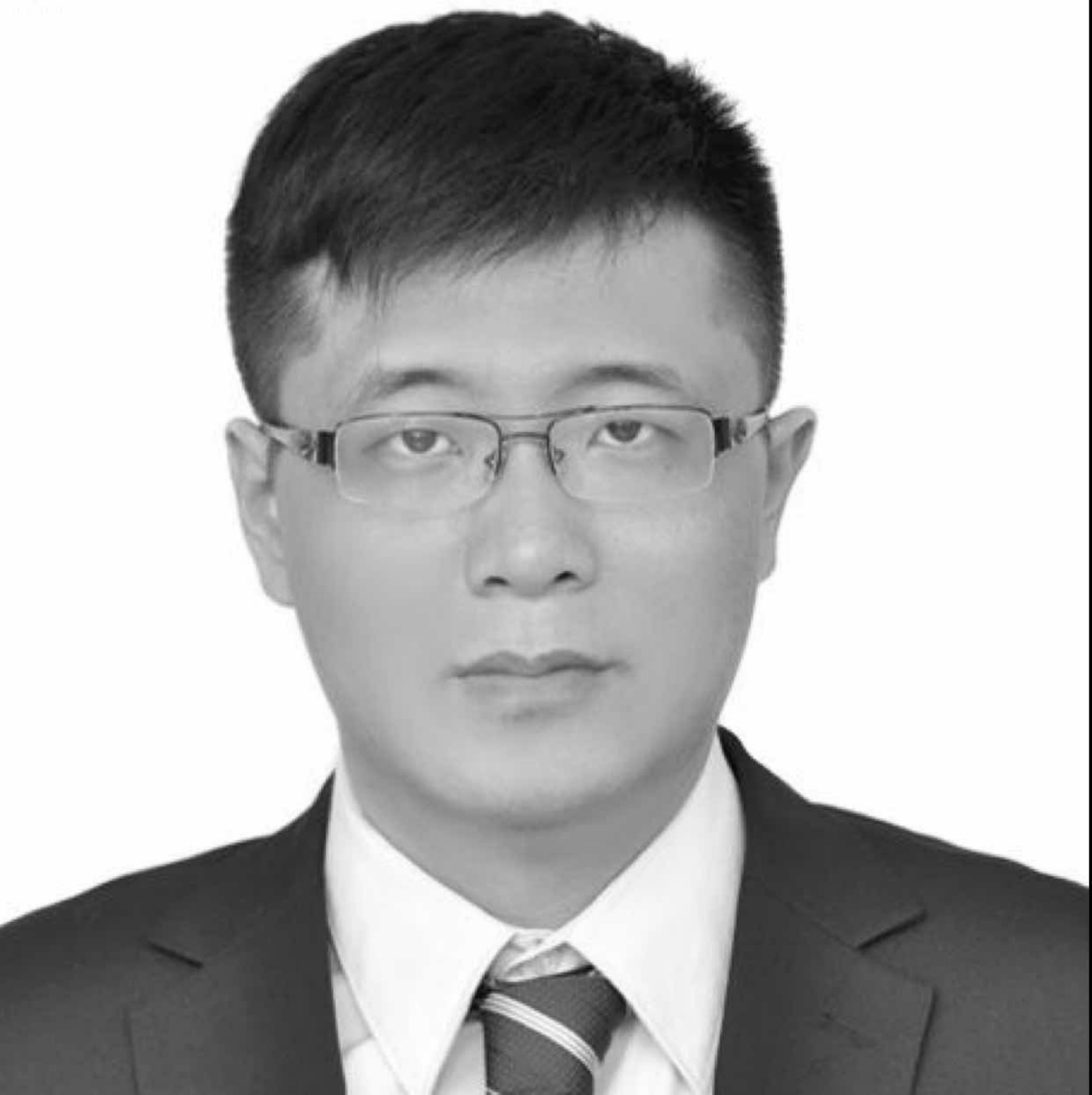}}]{Cewu Lu}
is an Associate Professor at Shanghai Jiao Tong University (SJTU). Before he joined SJTU, he was a research fellow at Stanford University, working under Prof. Fei-Fei Li and Prof. Leonidas J. Guibas. He was a Research Assistant Professor at Hong Kong University of Science and Technology with Prof. Chi Keung Tang. He got his PhD degree from the Chinese Univeristy of Hong Kong, supervised by Prof. Jiaya Jia. He is one of the core technique members in Stanford-Toyota autonomous car project. He serves as an associate editor for Journal CVPR and reviewer for Journal TPAMI and IJCV. His research interests fall mainly in computer vision, deep learning, deep reinforcement learning and robotics vision.
\end{IEEEbiography}

\vfill

\end{document}